\documentclass{article} %
\ifdefined\pdfsuppressptexinfo\pdfsuppressptexinfo=15\fi
\PassOptionsToPackage{sort}{natbib}
\usepackage{iclr2027_conference,times}
\usepackage{amsthm}
\newtheorem{proposition}{Proposition}
\newtheorem*{propositionrestated}{Proposition~\ref*{prop:output-gradient}} %

\usepackage{amsmath,amsfonts,bm}

\def\eqref#1{equation~\ref{#1}}

\def\1{\bm{1}}

\DeclareMathAlphabet{\mathsfit}{\encodingdefault}{\sfdefault}{m}{sl}
\SetMathAlphabet{\mathsfit}{bold}{\encodingdefault}{\sfdefault}{bx}{n}

\usepackage{url}
\usepackage{booktabs} 
\usepackage{graphicx}
\usepackage{placeins}
\usepackage{algorithm} 
\usepackage{algpseudocode} 
\usepackage{xcolor}  
\usepackage{hyperref}

\usepackage{mathtools}

\usepackage{enumitem}
\setlist[itemize]{
  itemsep=8pt,
  parsep=0pt,
  topsep=2pt,
  partopsep=0pt,
  leftmargin=1.5em,
  labelsep=0.5em
}

\setlist[enumerate]{
  itemsep=0pt,
  parsep=0pt,
  topsep=2pt,
  partopsep=0pt,
  leftmargin=1.5em,
  labelsep=0.5em
}

\usepackage[capitalize,noabbrev]{cleveref}

\makeatletter
\newcommand{\sublabel}[2]{\protected@edef\@currentlabel{\thetable#1}\label{#2}}
\newcommand{\figsublabel}[2]{\protected@edef\@currentlabel{\thefigure#1}\label{#2}}
\makeatother

\usepackage{xcolor}
\definecolor{findingblue}{HTML}{2A7F62}

\newcounter{researchfinding}
\newcommand{\researchfinding}[2]{%
  \begingroup
  \refstepcounter{researchfinding}%
  \label{#1}%
  \endgroup
  \textit{\textcolor{findingblue}{Finding \theresearchfinding: #2}}%
}

\title{LESSER: Post-Training Data Selection \\ with Output-Layer Gradients}

\author{Lyuxin David Zhang$^{1}$ \quad Eric Wong$^{1}$ \quad
Surbhi Goel$^{1}$ \quad Anton Xue$^{2}$ \\
$^{1}$University of Pennsylvania \\
$^{2}$University of Texas at Austin
}

\iclrfinalcopy

\newcommand{\LESSER}{\textsc{LESSER}}

\newcommand{\RDS}{\textsc{RDS+}}
\newcommand{\LESS}{\textsc{LESS}}

\begin{document}

\maketitle
\lhead{Preprint. Under review.}

\begin{abstract}

The choice of post-training data for large language models substantially affects downstream performance.
Gradient-based data selection is a popular approach that ranks training data by how well their gradients align with those of a small validation set.
However, ranking with full-parameter gradients requires an expensive backward pass on every sample, making computation intractable for large candidate pools.
This raises a natural question: can we approximate full-gradient features at a fraction of the cost?
Conveniently, we find that output-layer gradients suffice for effective data selection, yet require only the cheaper forward pass.
We implement this as LESSER, a drop-in wrapper for selection methods that reduces the feature-extraction FLOP cost by $9.7\times$ for SFT and $3.0\times$ for RL benchmarks, while tracking full-gradient performance on downstream tasks. 
Empirically, we find that even when output-layer and full gradients rank individual samples differently, they select batches with aligned gradients.

\end{abstract}

\begin{figure}[H]
\centering
\includegraphics[width=0.9\linewidth,height=0.4\textheight,keepaspectratio]{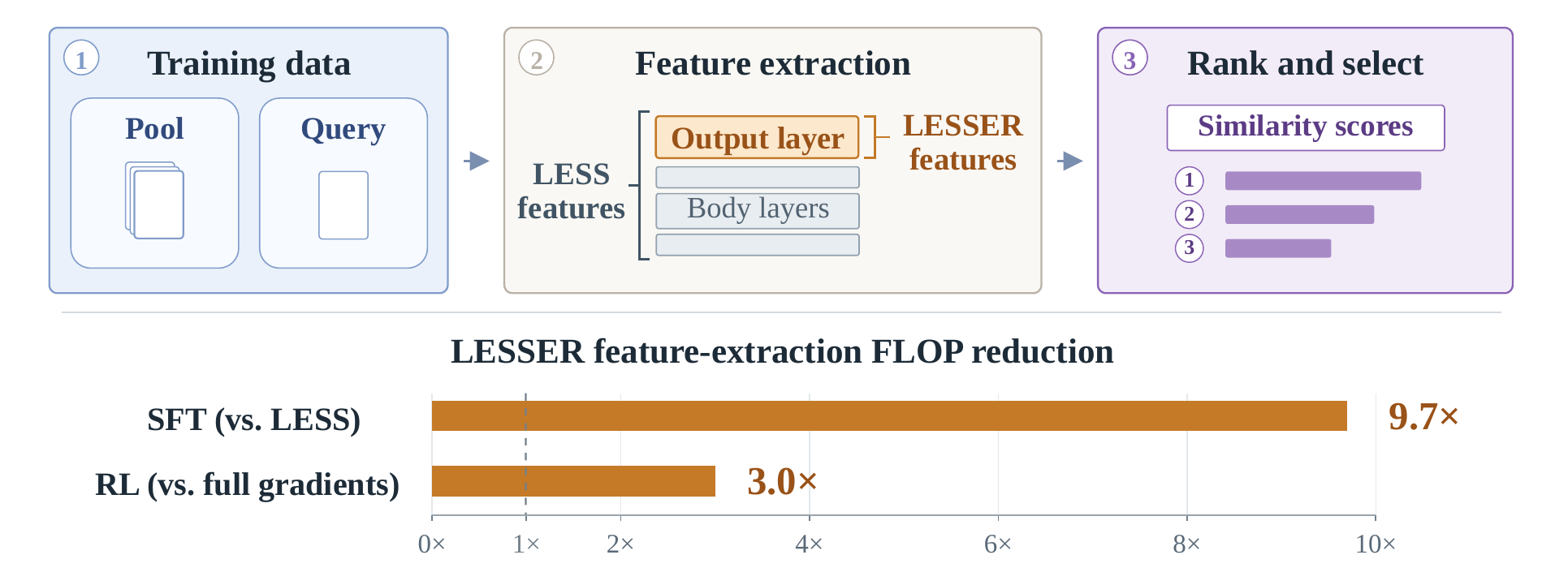}
\caption{
\textbf{Efficient data selection with output-layer gradients.}
Given a candidate training pool and a small query set representative of downstream tasks, \LESSER{} extracts output-layer gradient features to rank and select training examples.
These features approximate the downstream behavior of full-gradient methods like \LESS{} while substantially reducing feature-extraction FLOPs.
}
\label{fig:teaser}
\end{figure}

\section{Introduction}

The choice of post-training data substantially shapes the performance of large language models (LLMs)~\citep{ouyang2022training,wang2023self,guo2025deepseek}.
In particular, training on small, high-quality subsets tailored to a target task can match or even outperform training on a larger, general pool~\citep{xia2024less,muennighoff2025s1}.
However, identifying which subset to use is non-trivial, as the utility of each sample depends on the model and target task, and selection can become expensive when the candidate pool grows.
These challenges have motivated the development of data selection methods for identifying targeted, high-quality subsets~\citep{killamsetty2021glister}.

A common approach to targeted data selection maps candidate and query examples into a shared feature space and selects candidates by their similarity to the queries~\citep{zhang2025survey}.
When choosing features, gradients are a natural choice because they describe how each training sample would update the model.
This principle underlies modern LLM data selection methods such as \LESS{}~\citep{xia2024less} and GIST~\citep{min2026gist} for SFT, GradAlign~\citep{yang2026gradalign} for RL, and GRACE~\citep{panigrahi2025good} for distillation teacher selection.

However, full-parameter gradients are costly to compute at scale: each candidate requires a backward pass through the LLM.
For large candidate pools, extraction can cost more than training on the selected data itself~\citep{yin2024compute}.
This has motivated more compute-efficient features, such as \RDS{}~\citep{ivison2025large}, which uses hidden-state features, and proxy-model methods such as \citet{yang2024smalltolarge}, which use smaller models to guide selection.
Despite their cost, however, gradient-based features often yield superior subsets~\citep{nayak2026critical}.
We thus aim to approximate gradients at a fraction of the cost.

In particular, output-layer gradients are attractive because they can be computed directly from last-layer hidden states and output probabilities available during a forward pass, without backpropagating through the model.
These features have also proved useful in other settings, including active learning for vision models~\citep{ash2019deep}, data valuation~\citep{deng2025efficient}, and data attribution~\citep{ran2026sketching} methods.
We find that they are also effective for post-training data selection.

\subsection{Our contributions}

We propose \textbf{\LESSER{}} (\textbf{L}ast-layer \textbf{E}rror \textbf{S}ignals for \textbf{S}calable \textbf{E}xample \textbf{R}anking), a drop-in wrapper for gradient-based data selection methods that only swaps in output-layer gradients, leaving the selection rule unchanged.
We implement this for~\LESS{} in SFT data selection, for GradAlign in RL problem selection, and for GRACE in distillation teacher selection.

\paragraph{\LESSER{} is cost-efficient.}
Extracting output-layer gradients is cost-efficient, requiring $9.7\times$ fewer FLOPs on SFT and $3.0\times$ fewer FLOPs on RL.
This effect is more pronounced in our wall-clock experiments (Section~\ref{sec:cost}), where \LESSER{} reduces feature-extraction time by $17.8\times$ in SFT, the added scoring time per selection round by $3.5\times$ in RL, and feature-extraction time by $3.9$--$5.4\times$ in distillation teacher selection.

\paragraph{\LESSER{} tracks full-gradient performance.}
Across a variety of setups, output-layer gradients find subsets that reach downstream performance similar to that of subsets found with full gradients.

\begin{itemize}
    \item \textbf{SFT.} Across five SFT tasks and four LLMs, the absolute downstream score gap between \LESSER{} and \LESS{} averages only $1.3$ points, growing to $2.2$ against \RDS{}, and to $2.6$ against a random selection baseline.
    Relative to \LESS{}-selected batches, the centered gradient cosine similarity is \(0.29\) for \LESSER{}, compared with \(0.03\) for \RDS{} and approximately zero for Random.
    At the base model, the batch gradients of \LESSER{} selections thus align with those of \LESS{} more than other baselines do.

    \item \textbf{RL}. 
    In RL, the problems \LESSER{} and GradAlign select in the first round have a Jaccard similarity of $0.53$, compared to the random baseline's $0.075$. After training, \LESSER{} and GradAlign reach the same final test accuracy.

    \item \textbf{Distillation.}
    Across three different distillation settings, \LESSER{} consistently selects the same top teacher from a pool of $10$ to $14$ teachers as the unmodified GRACE.
    This suggests that GRACE's teacher choice is robust to replacing full gradients with output-layer gradients.

\end{itemize}

\subsection{Related work}
\label{sec:related}

\paragraph{Targeted data selection.}
Data selection methods differ in the notion of usefulness they aim to capture.
Quality filtering and diversity-based curation seek examples that are broadly useful for instruction tuning~\citep{chen2023alpagasus,zhou2023lima,liu2024makes}.
Other methods select data relative to a reference distribution or dataset: DSIR~\citep{xie2023data} matches a target text distribution for pretraining, while CRAIG~\citep{mirzasoleiman2020coresets} constructs weighted subsets that approximate the full training-set gradient.
Selection can also be tied directly to a training objective: GLISTER~\citep{killamsetty2021glister} optimizes held-out validation performance, while GRAD-MATCH~\citep{killamsetty2021grad} matches the gradient of the training or validation set.
For targeted post-training,
\LESS{}~\citep{xia2024less} uses gradient alignment and \RDS{}~\citep{ivison2025large} uses hidden-state similarity.
GIST~\citep{min2026gist} projects gradients into a subspace defined by validation gradients, while GradAlign~\citep{yang2026gradalign} uses policy-gradient alignment for online RL selection.
Systematic evaluations show that selection gains vary across datasets and budgets~\citep{diddee2025chasing,nayak2026critical}, and that the cost of selection can outweigh its training gains~\citep{yin2024compute}.
We instead focus on the efficiency of data selection pipelines, and show that output-layer gradients are a cost-efficient substitute for full-gradient features.

\paragraph{Efficient gradient representations.}
Gradient-based attribution connects training examples to their effects on model predictions.
\citet{koh2017understanding} use influence functions to estimate the effect of upweighting a training example, while TracIn accumulates gradient inner products across training checkpoints~\citep{pruthi2020estimating}.
To make attribution scalable, TRAK~\citep{park2023trak} combines model linearization with random projection, while LoGra~\citep{choe2026your} exploits the outer-product structure of layer gradients to reduce projection costs.
These approaches compress gradient features but still require backpropagation.
Restricting the gradient to the last layer is another approach, used in GLISTER~\citep{killamsetty2021glister} and GRAD-MATCH~\citep{killamsetty2021grad} for subset selection and in BADGE~\citep{ash2019deep} for active learning.

Closely related, For-Value~\citep{deng2025efficient} derives a forward-only valuation score from output-gradient inner products, while RISE~\citep{ran2026sketching} combines sketches of this signal with an additional feature that maps prediction residuals into hidden-state space.
Both methods target valuation or attribution, which score individual examples, and treat the output-layer score as the value itself or as an approximation of full-gradient influence, respectively.
We instead keep existing SFT, RL, and distillation teacher selectors~\citep{panigrahi2025good} unchanged, replace only their gradient features, and evaluate the selections downstream.
In SFT especially, output-layer and full gradients rank individual examples differently, yet their selections perform similarly downstream.
This differs from the per-example view taken by prior work, under which weaker agreement on individual examples should yield weaker selections; Section~\ref{sec:analysis-new} explains it through the aligned gradients of the selected batches.

\section{Background}
\label{sec:background}

\subsection{Feature-based data selection}
\label{sec:background:selection}

We consider targeted data selection from a training pool
$\mathcal D=\{p_1,\dots,p_N\}$ using a small query set
$\mathcal Q=\{q_1,\dots,q_M\}$ drawn from the target test distribution.
We refer to examples in $\mathcal Q$ as \emph{queries}, since they specify the target task and guide which training examples are selected.
Given a budget $k$, the goal is to choose a subset $S\subseteq\mathcal D$ of size $k$ such that training on $S$ improves performance on the target distribution, with $\mathcal Q$ serving as the available proxy for this target during selection.

Which subset $S$ should we choose?
Many targeted data selection methods answer this question using a feature-based pipeline.
Each candidate $p\in\mathcal D$ and query $q\in\mathcal Q$ is mapped to a feature vector $\phi_p$ and $\phi_q$ in a shared representation space.
A similarity function $\operatorname{sim}(\phi_p,\phi_q)$ then measures how relevant candidate $p$ is to query $q$.
This separates two design choices: the feature representation and the selection rule, which together are used to extract a size-$k$ training subset from the pool.

\begin{figure}[!t]
\centering
\includegraphics[width=\linewidth,height=0.28\textheight,keepaspectratio]{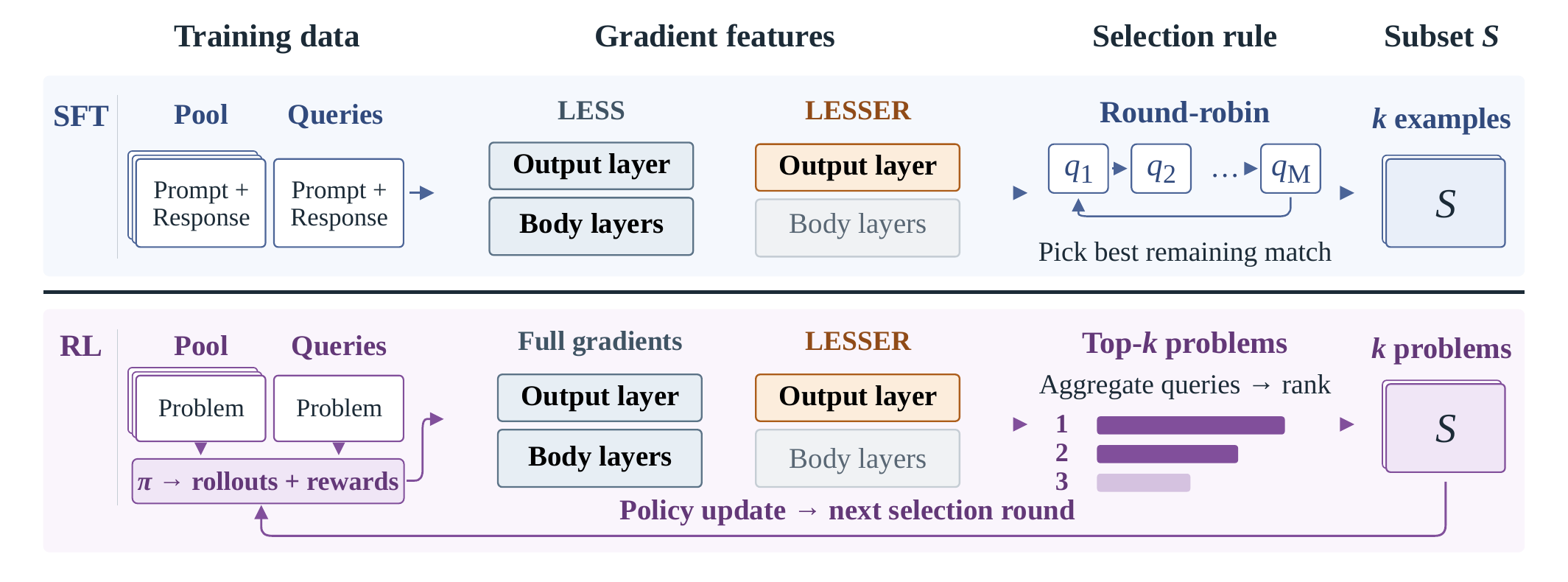}
\caption{\textbf{Data selection with output-layer gradient features in SFT and RL.}
Pool and query representations determine similarity scores, while the selection rule produces the training subset.
\LESSER{} replaces the gradient representation within this pipeline.}
\label{fig:selection-pipeline}
\end{figure}

The same basic pipeline applies across SFT and RL (Figure~\ref{fig:selection-pipeline}):
(1) compute features for the candidate pool and query set;
(2) compare candidate and query features to obtain relevance scores;
(3) apply a selection rule to these scores to choose a size-$k$ subset $S$; and
(4) train the model on the selected subset.
The precise selection rule can differ across settings.
In this work, we hold the selection procedure fixed within each setting and study the choice of feature representation.

\subsection{Gradient features}
\label{sec:background:gradients}

The feature-based view leaves open the central question of what information the representation $\phi_p$ should preserve.
A natural goal is to represent each candidate according to how training on it would affect performance on the target task.
We express this effect through a loss function, whose precise form depends on the post-training setting.

In SFT, each example $p=(x,y)$ consists of a prompt $x$ and target response $y$.
Let $\pi_\theta(y\mid x)$ denote the model's probability of the response under parameters $\theta$.
The supervised response loss is
\(
L_p(\theta) = -\log \pi_\theta(y\mid x).
\)
The same loss is defined for pool and query examples.
In RL, an example $p$ is instead a problem whose sampled responses and rewards determine a policy-optimization loss $L_p(\theta)$.
Following the online selection setup of GradAlign~\citep{yang2026gradalign}, this loss is constructed from rollouts under the current policy and changes as the policy is updated.
In either setting, each candidate and query is therefore associated with a scalar training objective, giving us a common way to describe the effect of training on that example.

Gradients provide a natural feature for these objectives: they describe the local parameter update induced by an example, and their alignment has a direct first-order interpretation in terms of how training on one example affects another.
For a candidate $p\in\mathcal D$ and query $q\in\mathcal Q$, define
\[
G_p = \nabla_\theta L_p(\theta), \qquad G_q = \nabla_\theta L_q(\theta).
\]
Consider taking one SGD step on candidate $p$ with learning rate $\eta>0$.
Then $\theta'=\theta-\eta G_p$, and a first-order Taylor expansion of the query loss gives
\[
L_q(\theta') - L_q(\theta)
\approx
\nabla_\theta L_q(\theta)^\top(\theta'-\theta)
=
-\eta G_q^\top G_p.
\]
Thus, when $G_q^\top G_p>0$, training on $p$ is predicted to reduce the loss on query $q$ to first order.

Standard gradient inner products then give a natural notion of similarity.
Because gradient magnitude can vary with factors such as sequence length, methods such as \LESS{}~\citep{xia2024less} further perform normalization or use cosine similarity instead.
Moreover, since $G_p\in\mathbb R^{|\theta|}$ has one coordinate per model parameter, gradient features are often projected to a lower-dimensional space for efficient storage and similarity computation~\citep{park2023trak,xia2024less}.
However, the main drawback of gradient features is computational cost: obtaining a full-model gradient requires backpropagating through the entire model for every scored example.

\section{\LESSER{}: feature extraction in one forward pass}
\label{sec:method}

Our main idea is to use output-layer gradients in place of full-model gradients, building on their use in settings such as active learning~\citep{ash2019deep} and data attribution~\citep{deng2025efficient,ran2026sketching}.
\Cref{fig:selection-pipeline} illustrates this substitution: \LESSER{} replaces the full-model gradients used by methods such as \LESS{} with output-layer gradients while leaving the selection rule unchanged.
Crucially, in our SFT and RL settings, these output-layer gradients can be computed from forward-pass quantities alone, avoiding backpropagation through the model.

A simple but key observation makes this possible: the standard SFT and RL scoring objectives we consider share a weighted token-level form:
\begin{equation}
\label{eq:weighted-token-loss}
L_p(\theta)
=
-\sum_t \alpha_t \log \pi_\theta(y_t\mid x,y_{<t}).
\end{equation}
We treat the per-token weights \(\alpha_t\) as fixed when differentiating.
In SFT, they select the supervised positions; in RL, they additionally incorporate response advantages and token weighting from the GradAlign scoring objective~\citep{yang2026gradalign}.
For this common form, the output-layer gradient can be written explicitly using forward-pass quantities.

\newcommand{\propoutputgradientstatement}{%
Let \(W\) be the final vocabulary projection, \(h_t\) the hidden state predicting \(y_t\), and \(e_{y_t}\) the one-hot vector for \(y_t\).
Let \(W_{\mathrm{out}}\) be a copy of \(W\) used only as the readout, so that \(h_t\) does not depend on it.
Then, the output-layer gradient of~\cref{eq:weighted-token-loss} satisfies
\[
g_p
\coloneqq
\nabla_{W_{\mathrm{out}}} L_p(\theta)
=
\sum_t
\alpha_t
\bigl(\operatorname{softmax}(Wh_t)-e_{y_t}\bigr)
h_t^\top.
\]
}
\begin{proposition}
\label{prop:output-gradient}
\propoutputgradientstatement
\end{proposition}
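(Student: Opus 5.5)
The plan is a direct chain-rule computation, token by token, using linearity of the gradient. Write the model so that the logits at position \(t\) are \(z_t = W_{\mathrm{out}} h_t\). Here \(h_t\) is produced by the rest of the network; it may involve \(W\) (for instance through tied embeddings), but by hypothesis it does not involve the readout copy \(W_{\mathrm{out}}\). After differentiating, we evaluate at \(W_{\mathrm{out}} = W\), so that \(z_t = Wh_t\). Since the weights \(\alpha_t\) are treated as constants, linearity gives
\[
\nabla_{W_{\mathrm{out}}} L_p = -\sum_t \alpha_t \nabla_{W_{\mathrm{out}}} \log \pi_\theta(y_t\mid x,y_{<t}),
\]
so it suffices to handle a single token.

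For one token, write \(\log \pi_\theta(y_t\mid x, y_{<t}) = z_{t,y_t} - \log\sum_v \exp(z_{t,v})\). Differentiating in \(z_t\) gives the standard identity
\[
\nabla_{z_t}\bigl(-\log\pi_\theta(y_t\mid x,y_{<t})\bigr) = \operatorname{softmax}(z_t) - e_{y_t}.
\]
Next, apply the chain rule through the linear map \(z_t = W_{\mathrm{out}} h_t\). Because \(h_t\) is constant with respect to \(W_{\mathrm{out}}\), we have \(\partial z_{t,v}/\partial (W_{\mathrm{out}})_{vj} = h_{t,j}\), and all other entries of this Jacobian vanish. Therefore the gradient of a scalar function \(f(z_t)\) with respect to \(W_{\mathrm{out}}\) is the outer product \(\nabla_{z_t} f \, h_t^\top\). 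Combining these two facts, substituting \(W_{\mathrm{out}} = W\), and summing with weights \(\alpha_t\) yields the stated formula.

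The only subtle step is justifying that \(h_t\) contributes no additional terms. This is exactly why the statement introduces the separate readout copy \(W_{\mathrm{out}}\). Under weight tying, \(W\) also appears as the input embedding, so \(\nabla_W L_p\) would pick up extra backpropagated terms through \(h_t\). Differentiating only with respect to the untied copy removes them. I will state this explicitly, and otherwise the argument is routine.

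Finally, I will note that every quantity in the formula, namely \(h_t\), \(\operatorname{softmax}(Wh_t)\), and \(y_t\), is available from a single forward pass.
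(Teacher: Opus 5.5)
Your proposal is correct and follows essentially the same route as the paper's proof. Both use the softmax cross-entropy gradient with respect to the logits, then the chain rule through the linear readout to get $(\operatorname{softmax}(Wh_t)-e_{y_t})h_t^\top$ (using that $h_t$ does not depend on $W_{\mathrm{out}}$), then linearity over positions with the $\alpha_t$ held fixed. Writing $z_t = W_{\mathrm{out}}h_t$, evaluating at $W_{\mathrm{out}}=W$ afterwards, and adding the weight-tying remark make your write-up slightly more careful than the paper's, which blurs $W$ and $W_{\mathrm{out}}$, but the argument is the same.
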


\Cref{prop:output-gradient} follows directly from the softmax cross-entropy gradient; Appendix~\ref{app:output-gradient-proof} gives a full proof.
Since both \(h_t\) and \(\operatorname{softmax}(Wh_t)\) are available from the forward pass, \(g_p\) can be computed without backpropagating through the model.
We use \(\operatorname{vec}(g_p)\) as the feature for each candidate and query in place of the full-model gradient \(G_p=\nabla_\theta L_p(\theta)\).

When needed for efficient storage and similarity computation, we project
\(\operatorname{vec}(g_p)\) to a lower dimension following standard practice for gradient features~\citep{johnson1984extensions,park2023trak};
Appendix~\ref{app:sketch} specifies our two-sided projection.

\section{\LESSER{} is effective at targeted data selection}
\label{sec:experiments}

\begin{figure*}[t]
\centering
\includegraphics[width=\linewidth]{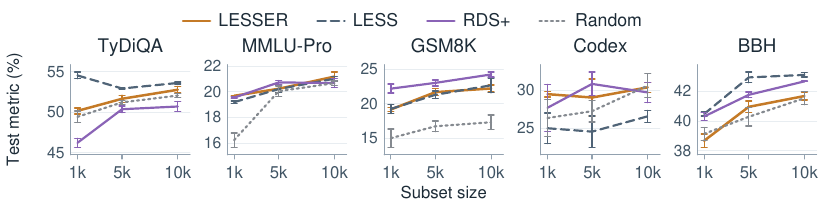}
\vspace{-5pt}
\caption{
\textbf{\LESSER{} closely tracks \LESS{} on downstream performance.}
Panels show Llama-2-7B; values are means over three training seeds, and error bars show one sample standard deviation. Across all $60$ model--task--budget cells, the absolute score gap of \LESSER{} averages $1.3$ points against \LESS{}, $2.2$ against \RDS{}, and $2.6$ against Random.
}
\label{fig:sft-budget-curves}
\vspace{-5pt}
\end{figure*}

We evaluate \LESSER{} across targeted SFT, RL, and distillation teacher selection.
We compare its effectiveness with the corresponding full-gradient methods and measure the resulting computational savings.

\paragraph{SFT.}
\label{sec:rq1-sft}

We take $197{,}196$ Tulu~V2~\citep{ivison2023camels} instruction--response samples as our candidate pool, and take TyDiQA, MMLU-Pro, GSM8K, Codex, and BBH as our query and test sets, following prior work~\citep{xia2024less,ivison2025large,nayak2026critical}.
For data-selection methods, we use \textbf{\LESSER{}}, \textbf{\LESS{}}, \textbf{Random} (wherein samples are randomly chosen), and \textbf{\RDS{}} (hidden-state based), where selection is done with a standard round-robin rule until $k$ samples are chosen.
For selection sizes $k\in\{1{,}000,5{,}000,10{,}000\}$, we fully fine-tune Llama-2-7B, Llama-3.2-3B, Qwen3-4B-Base, and OLMo3-7B on each selected subset with three seeds.
We show Llama-2-7B in the main text and report the full model--task--budget grid in Appendix~\ref{app:full-grid}.
Baseline results are from \citet{nayak2026critical} except for MMLU-Pro, where we rerun all methods under our evaluation setup. 
Another caveat is that \LESSER{} uses only the base model, while \LESS{} uses Adam-preconditioned LoRA~\citep{hu2021lora} gradients at four warmup checkpoints for computational tractability and empirical improvements.
The two pipelines therefore differ beyond the feature, but \LESSER{} is much cheaper and more convenient to extract.
Appendix~\ref{app:sft-protocol} gives the full training protocol and replication checks.

\researchfinding{rf:sft}{
\LESSER{} is competitive with \LESS{} on downstream SFT performance.
}
On Llama-2-7B, the two methods are nearly identical on GSM8K and MMLU-Pro, while \LESS{} is stronger on TyDiQA and BBH and \LESSER{} is stronger on Codex (\cref{fig:sft-budget-curves}).
Across the four models, \LESSER{} frequently improves over Random, while \RDS{} is also competitive on several benchmarks.
We next ask whether \LESSER{}'s similarity scores behave consistently across the pool: do more similar examples lead to lower query loss?

\begin{figure*}[t]
\centering
\includegraphics[width=\linewidth]{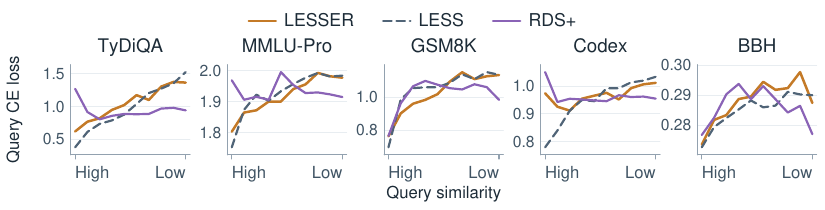}
\vspace{-5pt}
\caption{
\textbf{Higher-similarity training examples lead to lower query loss.}
Query loss after SFT on Llama-2-7B, stratified by example score. For both \LESS{} and \LESSER{}, examples with higher query similarity generally yield lower query loss after fine-tuning.
}
\label{fig:bin-loss-curves}
\vspace{-10pt}
\end{figure*}

A desirable property of data selectors is that they should rank data by their effect on the query loss: high-ranked samples should reduce it the most, while low-ranked samples the least.
To test that this holds for \LESSER{}, we use a similarity-bin diagnostic~\citep{nayak2026critical} where we partition the candidate pool into ten deciles.
We then fine-tune a separate model on the top $500$ samples from each bin and measure its loss on the query set.
If the similarity score captures target relevance, query loss should increase as we move from higher- to lower-similarity bins, giving a positive Spearman correlation between bin index and query loss.

\researchfinding{rf:bins}{
Higher \LESSER{} similarity predicts lower query loss in SFT.
} 
On Llama-2-7B, \LESSER{} similarity strongly predicts lower query loss, with a mean Spearman correlation of $0.86$, close to \LESS{} and far above \RDS{} (\cref{fig:bin-loss-curves}).
This relationship generalizes across models: \LESSER{} is positively correlated with query loss decrease in $19$ of $20$ model--task pairs, though somewhat less strongly than \LESS{} overall.

\paragraph{RL.}
\label{sec:rq1-rl}

We evaluate \LESSER{} in two GradAlign settings: avoiding problems with corrupted rewards and selecting problems during online RL training. In both settings, we replace only the full-gradient representation with output-layer gradients while keeping the selection procedure fixed.
We first test whether \LESSER{} can avoid problems whose rewards carry no useful learning signal.
Following \citet{yang2026gradalign}, we corrupt the rewards of half the candidate problems with coin flips and select a subset against AMC22 queries.

\researchfinding{rf:rl-corrupted}{\LESSER{} closely matches full policy gradients in avoiding problems with corrupted rewards.}
\LESSER{} selects $29.7\%$ corrupted problems, close to $27.3\%$ for full gradients and well below \textbf{Random} (\cref{fig:rl-noisy-selection}).
\textbf{AccGreedy}, which selects problems whose empirical accuracy is closest to $50\%$, instead favors corrupted problems because their coin-flip rewards push empirical accuracy toward $50\%$.
Appendix~\ref{app:rl-corrupted-protocol} gives the full protocol.

Another setting is in online RL, where in our case data selection happens after every 10 policy gradient updates.
We target Countdown using a mixed-domain training pool in which only $11.8\%$ of the problems are from Countdown, with the remainder from WebInstruct~\citep{yue2024mammoth2} and DAPO~\citep{yu2026dapo}.
Across five selection rounds, we compare \LESSER{} with full-gradient GradAlign, \textbf{dynamic sampling}~\citep{yu2026dapo}, which retains problems with nonzero rollout advantage, and \textbf{Random}, while training Qwen2.5-Math-1.5B-Instruct with GRPO~\citep{shao2024deepseekmath}.

\researchfinding{rf:rl-domain}{
\LESSER{} closely matches full policy gradients in targeted online RL training.
}
\LESSER{} closely tracks full-gradient GradAlign, and both reach the same final held-out accuracy (\cref{fig:rl-training}).
\LESSER{} reaches Random's final accuracy after only $15$ steps, using \textbf{$70\%$ fewer GRPO steps}.

\begin{figure}[tbp]
\centering
\begin{minipage}[t]{0.48\linewidth}
\vspace{0pt}
\centering
\includegraphics[width=\linewidth]{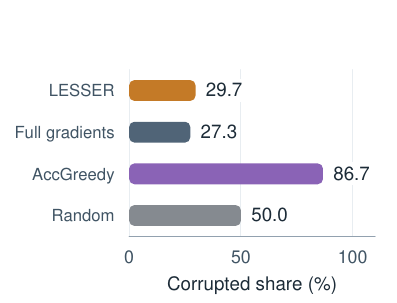}\par
\caption{\textbf{\LESSER{} closely matches full gradients in detecting problems with corrupted rewards.}
Corruption share among the selected problems. \textbf{Lower} is better.
}
\label{fig:rl-noisy-selection}
\end{minipage}\hfill
\begin{minipage}[t]{0.48\linewidth}
\vspace{0pt}
\centering

\includegraphics[width=\linewidth]{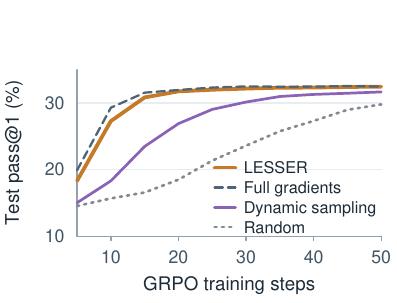}\par
\caption{\textbf{\LESSER{} closely tracks full gradients in RL.}
Held-out Countdown accuracy over GRPO training steps, averaged over two seeds. \textbf{Higher} is better.
}\label{fig:rl-training}
\end{minipage}
\end{figure}

\vspace{-10pt}

\paragraph{Distillation teacher selection.}
\label{sec:grace}
GRACE performs distillation teacher selection by having the student model backpropagate on each candidate teacher's generated responses; it then uses these gradients to rank and select teachers.
As in SFT and RL, \LESSER{} instead replaces full-gradient features with output-layer gradients, while leaving the rest of the GRACE pipeline unchanged.

\researchfinding{rf:grace}{
\LESSER{} closely preserves full-gradient GRACE teacher rankings.
}
Across all settings, the teacher rankings remain closely aligned, with Spearman correlations of $0.94$--$0.98$ (Table~\ref{tab:current-grace}), and \LESSER{} recovers the same top-ranked teacher as full-gradient GRACE.
That is, even though output-layer gradients are used, there is no drop in the quality of teacher selection.

\paragraph{Cost analysis.}
\label{sec:cost}

We give detailed accounting in Appendix~\ref{app:sft-pipeline-cost}, with highlights in Table~\ref{tab:selection-cost} and below.

\paragraph{SFT.}
\LESSER{} extracts features with a single forward pass, whereas the \LESS{} setup used in our experiments extracts LoRA-gradient features with full backpropagation at four checkpoints.
Counting FLOPs using \LESS{}'s configuration, this reduces feature-extraction FLOPs by $9.7\times$ and, extrapolating our A100 measurements to the full candidate pool, wall-clock extraction time by \textbf{$17.8\times$}.
Even against a single \LESS{} checkpoint, extraction is $4.4\times$ faster in wall clock.
Feature extraction accounts for $97\%$ to $99.7\%$ of \LESS{}'s combined extraction and fine-tuning FLOPs across our tested budgets, making it the dominant computational cost.

\paragraph{RL.}
Full-gradient GradAlign requires a separate forward--backward scoring pass after rollout generation, whereas \LESSER{} scores with projected output-layer features, which need only a forward pass. 
This reduces scoring FLOPs by $3.0\times$.
On the responses of one selection round, the added scoring time falls from $63.3$ to $18.1$ minutes on one A100, a \textbf{$3.5\times$} reduction.

\paragraph{Distillation.}
Full-gradient GRACE backpropagates through the student for every teacher response, whereas \LESSER{} requires only a forward pass.
Across the three teacher-selection settings, this reduces feature-extraction time by $3.9$ to $5.4\times$ while also lowering peak GPU memory.

\section{Why are output-layer gradients effective?}
\label{sec:analysis-new}

Since \LESSER{} often matches full-gradient methods downstream, we ask why it can do so using just the output-layer gradient.
In GRACE the two methods rank teachers almost identically and in RL they largely agree, but in SFT their selections differ.
We therefore ask whether \LESSER{} and \LESS{} might still update the model in similar directions.
At the base model where selection happens, we find that batches selected by \LESSER{} and \LESS{} produce aligned full-model training gradients, much more so than \RDS{} or Random.
In addition, both \LESSER{} and \LESS{} achieve most of their query-loss decrease within the first ten updates.

\paragraph{\LESSER{} can select different data from \LESS{}.}
\label{sec:analysis-per-example}

In SFT, we test whether the similar downstream performance of \LESSER{} and \LESS{} can be explained by choosing similar data.
We check for overlap using Jaccard similarity, $\lvert S\cap S'\rvert/\lvert S\cup S'\rvert$, and we compare rankings using Spearman rank correlation.

At $k=5{,}000$ on Llama-2-7B, the selected sets have mean Jaccard similarity $0.037$ (Appendix~\ref{app:selection-overlap}). 
In an audit across the $20$ model--task pairs, output-layer and full-gradient rankings have median Spearman correlation $0.52$ (Appendix~\ref{app:local-utility}).
This level of disagreement is specific to SFT: in RL, the two selections share $69\%$ of their problems in the first round, while in GRACE the statistics used for teacher selection remain highly correlated (Appendix~\ref{app:rl-grace}).
This leaves the SFT result unexplained, so we turn to how the selected examples affect training and ask whether the different selections induce similar updates.

\begin{table}[t]
\centering
\setlength{\belowcaptionskip}{4pt}
\caption{\textbf{Output-layer gradients are much cheaper to extract.}
Cost analysis on one A100~80GB for SFT feature extraction and RL scoring.
}
\label{tab:selection-cost}
\small
\renewcommand{\arraystretch}{1.12}
\setlength{\tabcolsep}{4pt}

\begin{minipage}[t]{0.36\linewidth}
\vspace{0pt}
\centering
\sublabel{a}{tab:sft-extraction-cost}
\begin{tabular*}{\linewidth}{@{\extracolsep{\fill}}l|r|r@{}}
\textbf{SFT Selector} & Time (h) & Speedup \\
\midrule
\LESSER{} & $3.16$ & $\mathbf{17.8}\times$ \\
\LESS{} & $56.16$ & $1.0\times$ \\
\bottomrule
\end{tabular*}
\end{minipage}\hfill
\begin{minipage}[t]{0.612\linewidth}
\vspace{0pt}
\centering
\sublabel{b}{tab:rl-round-time}
\begin{tabular*}{\linewidth}{@{\extracolsep{\fill}}l|r|r|r@{}}
\textbf{RL Selector} & Added time (min) & Speedup & Feature size \\
\midrule
\LESSER{} & $18.1$ & $\mathbf{3.5}\times$ & $32.8$~kB \\
Full gradients & $63.3$ & $1.0\times$ & $6.2$~GB \\
\bottomrule
\end{tabular*}
\end{minipage}
\end{table}

\paragraph{The selected batches align.}
\label{sec:analysis-batches}

We first characterize how a selected batch affects training.
For a batch $S$, let $G_S$ denote its full-model gradient at the base model $\theta_0$.
Our goal is to compare the training directions induced by two selections, which requires us to exclude gradient components that are common across the candidate pool.
In particular, even independently sampled random batches can have high cosine similarity because their gradients share the population mean.
We observe this effect empirically in Appendix~\ref{app:gradient-similarity}.
We therefore subtract the expected gradient $\mu$ of a random batch and compare the centered gradients $G_S-\mu$ and $G_{S'}-\mu$ using the centered batch-gradient alignment defined in Appendix~\ref{app:gradient-similarity}.
This measures whether two selections emphasize similar training directions beyond random sampling.

We evaluate gradient alignment using the actual training batch size of $128$ across $20$ model--task pairs (Figure~\ref{fig:alignment-base}).
As a reference for the scale of this measurement, we first compare different \LESS{} batches drawn from the same $k=5{,}000$ selection; their median alignment is $0.53$.
Against this reference, \LESSER{} batches show substantial alignment with \LESS{}, with a median alignment of $0.29$.
In contrast, the median alignment with \LESS{} is $0.03$ for \RDS{} and approximately zero for Random.
Thus, under the batch size used in training, \LESSER{} batches align with \LESS{} batches far more than \RDS{} or random batches do.

\paragraph{The effect of selection happens early in training.}
\label{sec:analysis-first-updates}

Above, we showed that \LESSER{} and \LESS{} select batches whose full-model gradients are aligned at the base model, despite selecting largely different examples.
The remaining question is when selection affects the subsequent training run.
We therefore track query loss throughout the full $80$-update SFT runs.

The query-loss gains from selection are concentrated early in training.
By step $10$, \LESSER{} and \LESS{} have achieved a median $84\%$ and $82\%$, respectively, of their maximum query-loss decrease over the full $80$ updates, compared with $35\%$ for Random (Figure~\ref{fig:training-directions}).
This connects the first-order selection at the base model to the reduction in query loss.

\section{Discussion and Conclusion}
\label{sec:conclusion}

\paragraph{Discussion.}
Our results suggest three broader lessons. 

First, compared to hidden-state features, output-layer gradients preserve more of the signal used by full-gradient data-selection methods at a similar extraction cost. 
They are also more directly tied to the optimization objective, making them a natural starting point when extraction cost matters. 

Second, agreement is strongest at the level each pipeline acts on. SFT trains on batch-aggregated gradients, RL selects problems after aggregating sampled responses, and GRACE ranks teachers using statistics over many responses. This suggests a broader principle: efficient selection features need not recover per-example full gradients, as long as they preserve the aggregate geometry that each pipeline uses.

Third, full-gradient fidelity may therefore be the wrong objective for evaluating a selection proxy. Prior work on training-data attribution~\citep{grosse2023studying} shows that recovering full-network influence can require information distributed across layers. This suggests that fine-grained approximation may be important for influence estimation or data attribution, while selection from realistic, already-curated LLM data pools may only require preserving which examples or subsets are useful.

\begin{figure}[t]
\centering
\begin{minipage}[t]{0.49\linewidth}
\centering
\includegraphics[width=\linewidth]{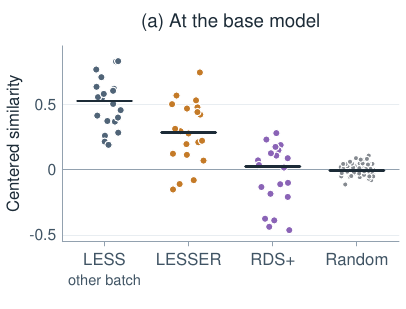}
\end{minipage}\hfill
\begin{minipage}[t]{0.49\linewidth}
\centering
\includegraphics[width=\linewidth]{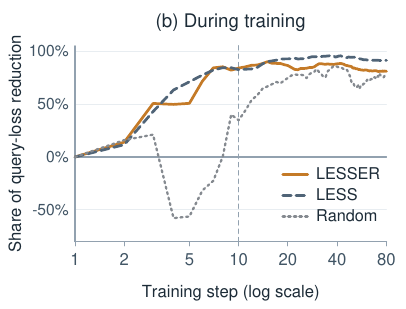}
\end{minipage}
\caption{
\textbf{\LESSER{}-selected batches induce gradients aligned with those from \LESS{}.}
(a) Centered batch-gradient alignment between \LESS{} batches and batches selected by each method, at the base model.
Dots denote model--task pairs, and bars show medians.
(b) Median share across model--task pairs of the query-loss decrease reached by each training step.
}
\label{fig:gradient-similarity}
\figsublabel{a}{fig:alignment-base}
\figsublabel{b}{fig:training-directions}
\end{figure}

\paragraph{Limitations.}
Our experiments span multiple models, tasks, and selection settings, but do not cover all post-training regimes or selection objectives.
We also focus primarily on near-initialization training effects, leaving a fuller understanding of how output-layer gradient signals evolve over longer training trajectories to future work.
Our SFT comparison is between complete pipelines rather than a matched-checkpoint ablation, so \LESSER{} and \LESS{} differ in more than just their features. 
Nevertheless, \LESSER{} needs neither warmup training nor backpropagation, and it remains competitive with this more expensive pipeline.

\paragraph{Future Work.}
Future work could develop diagnostics that predict, before running selection, when output-layer gradients will recover the rankings or update directions induced by full gradients.
A second direction is to identify which computations inside the model make this agreement possible, for example by tracing how selection-relevant information is transformed across layers.
Finally, a stronger theory could characterize when such agreement should emerge under realistic, structured gradient distributions.

\paragraph{Conclusion.}
We study whether output-layer gradients can replace full-gradient features for data selection in LLM post-training.
Across SFT, RL, and distillation teacher selection, they often match full-gradient methods downstream at substantially lower cost.
In RL and GRACE, the resulting selection statistics largely agree, while in SFT, different rankings nevertheless induce aligned batch updates.
Output-layer gradients therefore offer a simple, efficient alternative to full gradients.

\bibliography{iclr2027_conference}
\bibliographystyle{iclr2027_conference}

\clearpage
\appendix
\paragraph{Appendix guide.}

Appendix~\ref{app:setup} gives the experimental protocols and cost accounting; Appendix~\ref{app:derivations} proves Proposition~\ref{prop:output-gradient} and gives the projection of the output-layer gradient. The results follow the main experiments: SFT in Appendix~\ref{app:sft-results} and GRACE teacher selection in Appendix~\ref{app:grace-controls}. Appendix~\ref{app:utility-results} reports the overlap of the deployed selections, and Appendix~\ref{app:new-finite} supports Section~\ref{sec:analysis-new}.

\section{Experimental Setup Details}
\label{app:setup}

This appendix specifies SFT on selected examples, RL on selected problems, GRACE teacher ranking, and cost accounting.

\subsection{SFT}
\label{app:sft-protocol}
\paragraph{Baselines and MMLU-Pro.}
The SFT grid uses the candidate pool, target tasks, training recipe, and evaluation harness of \citet{nayak2026critical}. On TyDiQA, GSM8K, Codex, and BBH, we add \LESSER{} runs to their released Random, \RDS{}, and \LESS{} results. Under the zero-shot chat-template evaluation of MMLU-Pro, many fine-tuned models do not produce an answer in the expected format, so accuracy depends mainly on the answer format. We therefore evaluate MMLU-Pro five-shot with the \texttt{mmlu\_pro} task of the LM Evaluation Harness on all $12{,}032$ test questions, and we retrain all four methods, using the released selections for the three baselines, with three seeds for every model and budget. All MMLU-Pro results in the paper use this evaluation.

\paragraph{Candidate pool.}
The pool is the subsampled Tulu~V2 instruction mixture of $197{,}196$ instruction--response examples
\citep{ivison2023camels}, in its default order. The same pool is used for every (model, task, method, budget)
cell, so methods differ only in which subset they select.

\paragraph{Target tasks.}
We use the five target tasks of \citet{nayak2026critical}, listed in Table~\ref{tab:app-tasks}. Each task is specified by a small in-distribution validation/query set, used both to define the target and to rank pool examples for selection, and is scored on a held-out test split. 

\begin{table}[t]
\centering
\caption{Target tasks. ``Val.\ queries'' is the query-set size used to specify each target; metrics and splits follow \citet{nayak2026critical}, except that MMLU-Pro is evaluated five-shot.}
\label{tab:app-tasks}
\small
\begin{tabular}{l|l|c|l}
Task & Domain & Val.\ queries & Metric \\
\midrule
TyDiQA \citep{clark2020tydi}            & multilingual QA   & $9$  & F1 \\
MMLU-Pro \citep{wang2024mmlu}           & knowledge / reasoning & $70$ & accuracy \\
GSM8K \citep{cobbe2021training}         & grade-school math word problems       & $8$  & exact match \\
Codex \citep{chen2021evaluating}    & Python code generation     & $16$ & pass@$10$ \\
BBH \citep{suzgun2023challenging}       & challenging BIG-Bench reasoning       & $81$ & exact match \\
\bottomrule
\end{tabular}
\end{table}

\paragraph{Models.}
We evaluate four base models under full fine-tuning: Llama-3.2-3B~\citep{grattafiori2024llama}, Llama-2-7B~\citep{touvron2023llama}, Qwen3-4B-Base~\citep{yang2025qwen3}, and OLMo3-7B~\citep{olmo2025olmo}.

\paragraph{Training.}
Selection is followed by full fine-tuning: $2$ epochs, learning rate $2\times10^{-5}$ with a linear schedule and $3\%$ warmup, weight decay $0$, effective batch size $128$ from per-device batch $1$ and gradient accumulation $128$, bf16, and maximum sequence length $2048$.

\paragraph{Evaluation.}
We run inference with fixed prompts and decoding, and score each task with the metric listed in Table~\ref{tab:app-tasks}.
Codex uses HumanEval~\citep{chen2021evaluating} with HumanEvalPack instructions~\citep{muennighoff2024octopack}. 

\paragraph{Selection rule and budgets.}
\LESSER{}, \RDS{}, and \LESS{} share the round-robin rule. Budgets are $k\in\{1{,}000,5{,}000,10{,}000\}$, and every model--task--budget comparison reports three training seeds. Appendix~\ref{app:full-grid} gives the means and sample standard deviations.

\paragraph{Replication check.}
\label{app:baseline-replication}

We check the training and evaluation pipeline by rerunning released selections in our pipeline. For Llama-3.2-3B at $k=1{,}000$, we rerun the released \LESS{} selections and Random with three seeds and compare with Table~\ref{tab:main-results-k1000}; for Qwen3-4B-Base and OLMo3-7B, we rerun the released \LESS{} TyDiQA selection at seed $0$ (Table~\ref{tab:baseline-replication}). The \LESS{} reruns agree within $0.9$ points. Our Random reruns draw new random subsets instead of the released ones and differ by up to $3.0$ points. MMLU-Pro is omitted because we rerun it five-shot for all methods.

\begin{table}[htbp]
\centering
\caption{Pipeline replication, reported value / our rerun.}
\label{tab:baseline-replication}
\small\begin{tabular}{l|l|r|r}
Model & Task & \LESS{} & Random \\
\midrule
Llama-3.2-3B & TyDiQA & $66.4$ / $66.1$ & $53.7$ / $50.7$ \\
Llama-3.2-3B & GSM8K & $34.2$ / $34.0$ & $29.2$ / $27.6$ \\
Llama-3.2-3B & Codex & $56.3$ / $55.4$ & $59.2$ / $59.9$ \\
Llama-3.2-3B & BBH & $47.6$ / $47.2$ & $47.1$ / $47.1$ \\
\midrule
Qwen3-4B-Base & TyDiQA & $71.6$ / $71.4$ & -- \\
OLMo3-7B & TyDiQA & $62.5$ / $62.5$ & -- \\
\bottomrule
\end{tabular}
\end{table}

\paragraph{Feature extraction.}
\LESSER{} features are extracted from the pretrained base model, unlike \LESS{}. We sum residual--hidden-state outer products over supervised response positions, apply the $64\times128$ two-sided projection in Appendix~\ref{app:sketch}, then flatten and normalize the resulting $8{,}192$ entries. Pool and query examples share projection matrices. Prompt tokens are masked from the response loss.

\paragraph{Similarity decile training.}
We rank the pool by each method's similarity to the queries, split it into deciles with bin $0$ closest to the queries, and take the top $500$ examples from each bin. For each bin we fine-tune a separate model with the downstream recipe above and one seed, and measure the mean loss on the query set; Appendix~\ref{app:bin-loss} reports the results.

\subsection{Targeted reinforcement learning}
\label{app:rl-protocol}

\label{app:rl} 

We follow the setup and released implementation of GradAlign~\citep{yang2026gradalign}, and change only the selection feature. GradAlign scores a problem $p$ by the gradient $G_p$ of its advantage-weighted response loss $L_p(\theta)=\sum_jA_j\ell_j(\theta)$, where the $K$ sampled responses $y_j$ have rewards $R_j$, centered advantages $A_j=R_j-\bar R$, and token-weighted response losses $\ell_j$. We replace $G_p$ with the output-layer gradient of the same loss,
\begin{equation}
g_p=\sum_{j,t}w_{jt}A_j\,r_{jt}h_{jt}^{\top},
\label{eq:rl-output-gradient}
\end{equation}
where $w_{jt}$ are the fixed token weights of the released scorer, $h_{jt}$ is the final hidden state that predicts token $y_{jt}$, and $r_{jt}$ is its prediction residual. Query problems are aggregated with the same weights, and candidates are ranked by cosine similarity with the aggregated query feature. A problem whose sampled rewards are all equal has zero advantages and therefore scores zero under both features.

\subsubsection{Data, training, and evaluation}

The policy is Qwen2.5-Math-1.5B-Instruct~\citep{yang2024qwen2}. Following the data proportions of GradAlign Scenario~2, we construct a pool of $34{,}000$ problems: $4{,}000$ Countdown problems, $20{,}000$ WebInstruct problems~\citep{yue2024mammoth2}, and $10{,}000$ DAPO problems~\citep{yu2026dapo}. The Countdown instances are drawn separately from those used in GradAlign. We additionally use $200$ Countdown problems as queries and $500$ held-out Countdown problems for evaluation; both sets are disjoint from the training pool. Table~\ref{tab:rl-protocol} summarizes the experimental settings.

Training proceeds for multiple selection rounds. 
In each round, a method scores a chunk of $5{,}120$ candidate problems from the pool, selects $256$ of them, trains on them for $10$ GRPO steps, and then generates fresh responses for both the candidate and query problems using its updated policy. 
Random instead samples $256$ problems uniformly from the whole pool in each round.
These new responses are used for selection in the next round. All methods start from the same initial policy and use the same released reward functions and GRPO hyperparameters.

For evaluation, we sample $16$ responses for each of the $500$ held-out Countdown problems and report their mean correctness. Results are averaged over seeds $42$ and $43$ at evaluation steps completed by all runs.

\begin{table}[htbp]
\centering
\caption{RL selection, training, and evaluation settings.}
\label{tab:rl-protocol}
\small\begin{tabular}{@{}p{0.36\linewidth}|p{0.59\linewidth}@{}}
Setting & Value \\
\midrule
Candidates and selection & $5{,}120$ candidate problems; $256$ selected per round \\
Scoring responses & $16$ per candidate and per query problem \\
Training between selections & $10$ GRPO steps \\
Training batch and responses & $128$ problems per step; $128$ responses per problem \\
Optimizer & AdamW \citep{loshchilov2017decoupled}, learning rate $10^{-6}$ \\
Sampling & Temperature $0.7$; maximum response length $3{,}072$ tokens \\
KL penalty & Disabled \\
Evaluation & $500$ test problems; $16$ responses per problem \\
Training repetitions & Two seeds per method, averaged, for all four methods \\
\bottomrule
\end{tabular}
\end{table}

\subsubsection{Selection under corrupted rewards}
\label{app:rl-corrupted-protocol}
Following the corrupted-reward setting of GradAlign~\citep{yang2026gradalign}, we use Qwen3-8B-Base~\citep{yang2025qwen3}, with $512$ DAPO candidate problems~\citep{yu2026dapo} and $43$ AMC22 query problems. We sample $64$ responses per problem. For half of the candidates, we replace the correctness rewards of their sampled responses with independent fair-coin flips; the remaining candidates retain their true rewards. At the base model, each method selects $128$ candidates. AccGreedy ranks candidates by how close their empirical reward rate is to $0.5$. \LESSER{}, full gradients, and AccGreedy select $38$, $35$, and $111$ corrupted candidates, respectively, compared with $64$ expected under uniform random selection.

\subsection{GRACE teacher ranking}
\label{app:grace-protocol}

We use the released GRACE scorer~\citep{panigrahi2025good} with its default settings, replacing only the student's gradient features with \LESSER{}. The teachers, prompts, responses, $512$-dimensional random projection, and scoring rule are otherwise unchanged. GRACE scores four responses from each teacher to each of $512$ shared prompts, generated with vLLM~\citep{kwon2023efficient}; for GSM8K, these are four of $16$ sampled responses per prompt, selected by the released data preparation.

We evaluate three settings: (1) GSM8K with a Llama-3.2-1B student and $14$ teachers, consisting of the original ten teachers plus Qwen2.5-14B-Instruct~\citep{qwen2025qwen25technicalreport}, Phi-4~\citep{abdin2024phi}, OLMo-2-13B-Instruct~\citep{olmo20242}, and Gemma-2-27B-it~\citep{team2024gemma}; (2) GSM8K with an OLMo-2-1B student~\citep{olmo20242} and $10$ teachers; and (3) MATH~\citep{hendrycks2021measuring} with a Llama-3.2-3B student and $10$ teachers. Appendix~\ref{app:grace-controls} reports the resulting teacher rankings.

\subsection{Efficiency accounting and timing}
\label{app:efficiency}

\paragraph{SFT extraction FLOPs.}
We compare feature-extraction FLOPs for \LESSER{} and the released four-checkpoint \LESS{} configuration on Llama-2-7B.
The quantities used in the calculation are summarized in \cref{tab:sft-flop-params}.

\begin{table}[h]
\centering
\small
\caption{Quantities used for the Llama-2-7B SFT feature-extraction FLOP calculation.}
\label{tab:sft-flop-params}
\begin{tabular}{lcr}
\toprule
Quantity & Symbol & Value \\
\midrule
Model parameters & $P$ & $6{,}738{,}415{,}616$ \\
LoRA parameters & $P_{\rm LoRA}$ & $134{,}217{,}728$ \\
Pool examples & $N$ & $197{,}196$ \\
Processed tokens & $T$ & $86{,}343{,}509$ \\
Response tokens & $T_r$ & $25{,}430{,}302$ \\
Vocabulary size & $|V|$ & $32{,}000$ \\
Hidden dimension & $d$ & $4{,}096$ \\
\LESS{} projection dimension & $D$ & $8{,}192$ \\
\LESSER{} projection dimensions & $(m,n)$ & $(64,128)$ \\
\LESS{} checkpoints & $C$ & $4$ \\
\bottomrule
\end{tabular}
\end{table}

A forward pass costs approximately $2PT$ FLOPs~\citep{kaplan2020scaling,hoffmann2022training}.
\LESSER{} requires one forward pass, while its two-sided projection of the output-layer gradient adds
$
2T_r\bigl(|V|m+dn+mn\bigr),
$
giving
$$
F_{\LESSER{}}
=
2PT + 2T_r\bigl(|V|m+dn+mn\bigr).
$$

At each checkpoint, \LESS{} performs a forward pass, propagates activation gradients through the frozen model, computes the LoRA-gradient terms, and projects each example's gradient to $D=8{,}192$ dimensions.
Its total cost is therefore
\[
F_{\LESS{}}
=
4\left(
4PT
+
6P_{\rm LoRA}T
+
2DP_{\rm LoRA}N
\right).
\]
Using the values in \cref{tab:sft-flop-params}, the resulting ratio is
\[
\frac{F_{\LESS{}}}{F_{\LESSER{}}}
=
9.7\times.
\]
Under the standard approximation of $2PT$ for a forward pass and $6PT$ for a forward--backward pass, abstracting away LoRA and projection overhead, four-checkpoint \LESS{} costs approximately $24PT$, giving the commonly used estimate of approximately $12\times$.

\paragraph{SFT extraction wall clock.}
\label{app:sft-pipeline-cost}

We measure SFT feature-extraction time for \LESSER{} and \LESS{} on the same A100~80GB and the same $10{,}000$-example prefix of the candidate pool, timing the full extraction pipeline from data loading through feature writing.
\LESSER{} takes $576$ seconds, while one checkpoint of released \LESS{} takes $2{,}563$ seconds.
Extrapolating both measurements to the full $197{,}196$-example pool and accounting for the four checkpoints used by \LESS{} gives $3.16$ hours for \LESSER{} and $56.2$ hours for \LESS{}, a $17.8\times$ difference.

\paragraph{SFT extraction FLOPs relative to downstream training.}
We next ask whether the extraction savings remain substantial once downstream fine-tuning is included.
For one target and training run, we model the combined extraction and fine-tuning cost as
$2PT_{\rm pool}+6PT_{\rm train}$ for \LESSER{} and
$F_{\rm LESS}+6PT_{\rm train}$ for \LESS{}, where $F_{\rm LESS}$ is the detailed four-checkpoint extraction cost above.
As shown in \cref{tab:sft-extraction-share}, feature extraction accounts for at least $97.0\%$ of \LESS{}'s modeled extraction-plus-training FLOPs across the tested budgets.
Consequently, the combined extraction and fine-tuning cost is $7.7$--$9.5\times$ larger for \LESS{} than for \LESSER{}. We assume two training epochs and equal mean token length for pool and selected examples.
These estimates exclude warmup training, query processing, selection, and evaluation.

\begin{table}[htbp]
\centering
\caption{
SFT computation costs for one target and training run.
}
\label{tab:sft-extraction-share}
\small
\begin{tabular*}{\linewidth}{@{\extracolsep{\fill}}r|r|r|r@{}}
 & \multicolumn{2}{c|}{Extraction share / \%} & \\
Budget $k$ & \LESSER{} & Four-checkpoint \LESS{} &
\shortstack{Extraction + training\\FLOP ratio} \\
\midrule
$1{,}000$  & $97.0$ & $99.7$ & $9.47\times$ \\
$5{,}000$  & $86.8$ & $98.5$ & $8.58\times$ \\
$10{,}000$ & $76.7$ & $97.0$ & $7.69\times$ \\
\bottomrule
\end{tabular*}
\end{table}

\paragraph{RL scoring FLOPs and storage.}
In its scoring pass, \LESSER{} runs one forward pass over each scored token and projects the output residual and hidden state before accumulating their outer product.
For vocabulary size $|V|=151{,}936$, hidden dimension $d=1{,}536$, and projection dimensions $m=64$ and $n=128$, the projection costs
\[
2(|V|m+dn)+2mn
=
19{,}857{,}408
\]
FLOPs per scored token, and the forward pass $2P=3.09\times10^9$ for $P=1{,}543{,}714{,}304$ parameters, $3.11\times10^9$ in total.
Full-gradient scoring instead requires a separate forward--backward pass, costing approximately
$
6P = 9.26\times10^9
$
FLOPs per scored token, $3.0\times$ the cost of \LESSER{}.

\begin{table}[htbp]
\centering
\caption{
Computation and per-problem storage for RL scoring.
}
\label{tab:rl-feature-cost}
\small
\begin{tabular}{l|r|r}
Feature & FLOPs / scored token & Stored feature \\
\midrule
\LESSER{} & $3.11\times10^9$ & $32.8$~kB \\
Full gradients & $9.26\times10^9$ & $6.2$~GB \\
\bottomrule
\end{tabular}
\end{table}

\paragraph{RL scoring wall clock.}
\label{app:rl-rollout-reuse}

\LESSER{} scores each selection round with projected output-layer features, computed in a separate forward pass over the sampled responses after rollout generation. On the $81{,}920$ candidate and $3{,}200$ query responses of one round, on one A100~80GB PCIe and in one job, this pass takes $18.1$ minutes, against $63.3$ minutes for the full-gradient forward--backward pass, a $3.5\times$ reduction.

\paragraph{GRACE extraction wall clock.}
\label{app:grace-timing}
We estimate the cost of extracting GRACE features for all teacher responses by timing a subset and extrapolating to the full dataset.
For each teacher, we randomly sample $32$ responses and extract their features one at a time on one A100~80GB, after three untimed warm-up responses.
We use the fp32 student and the same $512$-dimensional projector configuration for both methods.
For each teacher, we multiply the measured mean time per response by its $2{,}048$ responses, then sum across teachers to obtain the total extraction times in Table~\ref{tab:grace-timing}.
Peak memory is measured directly during these runs and is not extrapolated.
Under this estimate, \LESSER{} is $3.9$--$5.4\times$ faster than full-gradient extraction while using less peak memory.
As a correctness check, the features produced by the timed implementation have cosine similarity of at least $0.99$ with the stored features used for ranking on two sampled responses from each setting.

\begin{table}[htbp]
\centering
\caption{\textbf{GRACE feature-extraction cost on one A100~80GB.}
Extraction time is estimated from $32$ timed responses per teacher and extrapolated to all $2{,}048$ responses per teacher; peak memory is measured directly.}
\label{tab:grace-timing}
\small
\begin{tabular}{l|r|r|r|r}
Setting & Full gradients & \LESSER{} & Speedup & \shortstack{Peak memory, GiB\\full / \LESSER{}} \\
\midrule
GSM8K, Llama-3.2-1B, $14$ teachers & $6.49$ h & $1.65$ h & $3.9\times$ & $19.6$ / $12.3$ \\
GSM8K, OLMo-2-1B, $10$ teachers & $5.34$ h & $0.99$ h & $5.4\times$ & $17.0$ / $7.0$ \\
MATH, Llama-3.2-3B, $10$ teachers & $14.68$ h & $3.09$ h & $4.8\times$ & $39.8$ / $24.2$ \\
\bottomrule
\end{tabular}
\end{table}

\section{Output-layer gradient feature and projection}
\label{app:derivations}

\subsection{Proof of \texorpdfstring{Proposition~\ref{prop:output-gradient}}{Proposition 1}}
\label{app:output-gradient-proof}
\begin{propositionrestated}
\propoutputgradientstatement
\end{propositionrestated}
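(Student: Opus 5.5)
The plan is a direct chain-rule computation, carried out token by token and then summed using linearity. Write the logits at position \(t\) as \(z_t = W_{\mathrm{out}} h_t\), with \(W_{\mathrm{out}}\) evaluated at the value \(W_{\mathrm{out}} = W\). By construction, \(h_t\) does not depend on \(W_{\mathrm{out}}\). The weights \(\alpha_t\) are treated as constants, as stipulated after~\cref{eq:weighted-token-loss}. Under these two conventions,
\[
\nabla_{W_{\mathrm{out}}} L_p = -\sum_t \alpha_t \nabla_{W_{\mathrm{out}}} \log \pi_\theta(y_t\mid x,y_{<t}),
\]
so it suffices to compute the gradient of a single token's log-probability.

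For one token, write \(\log \pi_\theta(y_t\mid x,y_{<t}) = e_{y_t}^\top z_t - \log \sum_v \exp(z_{t,v})\). The gradient with respect to the logits is the standard softmax cross-entropy identity:
\[
\nabla_{z_t}\bigl(-\log \pi_\theta(y_t\mid x,y_{<t})\bigr) = \operatorname{softmax}(z_t) - e_{y_t}.
\]
I would verify this in one line. Differentiate the log-sum-exp term coordinatewise; this gives \(\exp(z_{t,v})/\sum_{u}\exp(z_{t,u})\) in coordinate \(v\). The linear term contributes \(-e_{y_t}\).

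Next, apply the chain rule through the linear map \(z_t = W_{\mathrm{out}} h_t\). Since \(\partial z_{t,v}/\partial (W_{\mathrm{out}})_{ij} = \delta_{vi} h_{t,j}\), for any scalar function \(f(z_t)\) we have \(\nabla_{W_{\mathrm{out}}} f = (\nabla_{z_t} f)\, h_t^\top\). This step uses that \(h_t\) is constant in \(W_{\mathrm{out}}\); that is exactly the purpose of introducing the copy. Without the copy, tied embeddings could make \(h_t\) depend on \(W\), and extra terms would appear.

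Combining these pieces, weighting each token by \(\alpha_t\), and evaluating at \(W_{\mathrm{out}} = W\), so that \(\operatorname{softmax}(z_t) = \operatorname{softmax}(Wh_t)\), yields the stated formula for \(g_p\). There is no real obstacle here. The only point requiring care is being explicit about the two conventions: the \(\alpha_t\) are held fixed, which matters in RL where advantages depend on rollouts, and \(W_{\mathrm{out}}\) is decoupled from the hidden states. Together these make the derivative an exact partial derivative rather than an approximation.
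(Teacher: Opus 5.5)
Your proposal is correct and follows the paper's proof essentially step for step. Both derive the logit gradient $\operatorname{softmax}(z_t)-e_{y_t}$ from the log-sum-exp form, pass through the linear readout to get $(\nabla_{z_t}\ell_t)\,h_t^\top$, and sum over tokens by linearity with the $\alpha_t$ held fixed; your remark that the copy $W_{\mathrm{out}}$ is what removes extra terms under tied embeddings makes explicit a point the paper leaves implicit.
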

\begin{proof}
Fix a token $t$ in the loss, and write $a_t=Wh_t\in\mathbb R^{|V|}$ for its logits and $\pi_t=\operatorname{softmax}(a_t)$ for the predicted distribution $\pi_\theta(\cdot\mid x,y_{<t})$. The loss at position $t$ is
\[
\ell_t=-\log\pi_t[y_t]=-a_t[y_t]+\log\sum_{v=1}^{|V|}\exp\bigl(a_t[v]\bigr).
\]
\emph{Step 1: gradient with respect to the logits.} For each vocabulary index $i$,
\[
\frac{\partial\ell_t}{\partial a_t[i]}=-e_{y_t}[i]+\frac{\exp\bigl(a_t[i]\bigr)}{\sum_{v=1}^{|V|}\exp\bigl(a_t[v]\bigr)}=\pi_t[i]-e_{y_t}[i],
\qquad\text{so}\qquad
\nabla_{a_t}\ell_t=\pi_t-e_{y_t}=r_t.
\]
\emph{Step 2: gradient with respect to $W_{\mathrm{out}}$.} The entry $(W_{\mathrm{out}})_{ij}$ enters only the logit $a_t[i]=\sum_{j'=1}^{d}(W_{\mathrm{out}})_{ij'}h_t[j']$, with $\partial a_t[i]/\partial (W_{\mathrm{out}})_{ij}=h_t[j]$. By the chain rule,
\[
\frac{\partial\ell_t}{\partial (W_{\mathrm{out}})_{ij}}=\frac{\partial\ell_t}{\partial a_t[i]}\,h_t[j]=r_t[i]\,h_t[j],
\qquad\text{so}\qquad
\nabla_{W_{\mathrm{out}}}\ell_t=r_th_t^\top.
\]
\emph{Step 3: sum over positions.} By \cref{eq:weighted-token-loss}, $L_p=\sum_t\alpha_t\ell_t$ with fixed weights $\alpha_t$, and the gradient is linear, so $g_p=\nabla_{W_{\mathrm{out}}}L_p=\sum_t\alpha_t\,r_th_t^\top$.
\end{proof}

\subsection{Projection}
\label{app:sketch}
Storing the output-layer gradient $g_p$ of Proposition~\ref{prop:output-gradient} for every pool example is impractical, so \LESSER{} stores a random projection of it, as in TRAK~\citep{park2023trak}. The matrices $\Pi_1\in\{\pm1/\sqrt m\}^{m\times|V|}$ and $\Pi_2\in\{\pm1/\sqrt n\}^{d\times n}$ have independent Rademacher entries, are drawn once per model, and are shared across pool and query examples. The feature is
\begin{equation}
\phi_p=\frac{\operatorname{vec}(\Pi_1g_p\Pi_2)}{\lVert \Pi_1g_p\Pi_2\rVert_F},\qquad \Pi_1g_p\Pi_2=\sum_t\alpha_t\,(\Pi_1r_t)(\Pi_2^\top h_t)^\top,
\label{eq:prod-sketch}
\end{equation}
so each token's residual and hidden state are projected before their outer product, without forming the $|V|\times d$ gradient. We use $m=64$ and $n=128$, giving $8{,}192$ dimensions, for SFT and the sketched RL comparison.

\FloatBarrier
\section{Additional Results: Supervised Fine-Tuning}
\label{app:sft-results}

\FloatBarrier
\subsection{SFT results across four models}
\label{app:full-grid}

We test whether the Llama-2 results extend across model families, using four models, five tasks, three budgets, and three training seeds. The full grid is reported in Tables~\ref{tab:main-results-k1000}--\ref{tab:main-results-k10000}.

\LESSER{} has a higher mean than Random on TyDiQA for every model and budget, by $0.5$ to $9.6$ points.
On TyDiQA, \LESSER{} is within $0.9$ points of \LESS{} or above it on Qwen3 and OLMo3 at every budget.
On the two Llama models, \LESS{} is higher on TyDiQA by $3.1$ to $4.4$ points at $k=1{,}000$, and the difference shrinks to $0.8$ to $1.2$ points at $k=10{,}000$.
On MMLU-Pro, the four methods lie within $1.3$ points of each other on every model at $k=5{,}000$ and $k=10{,}000$.

\begin{table*}[!htbp]
\centering
\caption{Held-out performance at $k=1{,}000$, mean $\pm$ sample standard deviation over three seeds; each task uses its target metric on a $0$--$100$ scale, higher is better.}
\label{tab:main-results-k1000}
\small\begin{tabular}{l|l|r|r|r|r}
Model & Task & \LESSER{} & \LESS{} & \RDS{} & Random \\
\midrule
Llama-3.2-3B & TyDiQA & $63.3{\scriptstyle\pm}0.5$ & $\mathbf{66.4{\scriptstyle\pm}0.2}$ & $45.7{\scriptstyle\pm}0.4$ & $53.7{\scriptstyle\pm}1.6$ \\
Llama-3.2-3B & MMLU-Pro & $22.1{\scriptstyle\pm}2.3$ & $22.8{\scriptstyle\pm}0.5$ & $\mathbf{25.3{\scriptstyle\pm}0.8}$ & $14.9{\scriptstyle\pm}8.9$ \\
Llama-3.2-3B & GSM8K & $32.3{\scriptstyle\pm}0.9$ & $\mathbf{34.2{\scriptstyle\pm}0.5}$ & $32.3{\scriptstyle\pm}0.7$ & $29.2{\scriptstyle\pm}0.7$ \\
Llama-3.2-3B & Codex & $\mathbf{61.3{\scriptstyle\pm}2.4}$ & $56.3{\scriptstyle\pm}3.0$ & $60.6{\scriptstyle\pm}1.7$ & $59.2{\scriptstyle\pm}7.8$ \\
Llama-3.2-3B & BBH & $48.0{\scriptstyle\pm}0.2$ & $47.6{\scriptstyle\pm}0.2$ & $\mathbf{48.7{\scriptstyle\pm}0.5}$ & $47.1{\scriptstyle\pm}0.5$ \\
\midrule
Llama-2-7B & TyDiQA & $50.2{\scriptstyle\pm}0.4$ & $\mathbf{54.6{\scriptstyle\pm}0.4}$ & $46.2{\scriptstyle\pm}0.6$ & $49.4{\scriptstyle\pm}0.7$ \\
Llama-2-7B & MMLU-Pro & $\mathbf{19.7{\scriptstyle\pm}0.0}$ & $19.2{\scriptstyle\pm}0.1$ & $19.6{\scriptstyle\pm}0.1$ & $16.3{\scriptstyle\pm}0.6$ \\
Llama-2-7B & GSM8K & $19.2{\scriptstyle\pm}0.3$ & $19.2{\scriptstyle\pm}0.7$ & $\mathbf{22.2{\scriptstyle\pm}0.7}$ & $15.0{\scriptstyle\pm}1.3$ \\
Llama-2-7B & Codex & $\mathbf{29.5{\scriptstyle\pm}0.4}$ & $25.0{\scriptstyle\pm}2.0$ & $27.7{\scriptstyle\pm}3.1$ & $26.4{\scriptstyle\pm}2.4$ \\
Llama-2-7B & BBH & $38.7{\scriptstyle\pm}0.4$ & $\mathbf{40.5{\scriptstyle\pm}0.1}$ & $40.3{\scriptstyle\pm}0.3$ & $39.1{\scriptstyle\pm}0.4$ \\
\midrule
Qwen3-4B-Base & TyDiQA & $70.7{\scriptstyle\pm}0.4$ & $\mathbf{71.6{\scriptstyle\pm}0.0}$ & $66.4{\scriptstyle\pm}0.6$ & $68.6{\scriptstyle\pm}1.5$ \\
Qwen3-4B-Base & MMLU-Pro & $48.5{\scriptstyle\pm}0.9$ & $46.7{\scriptstyle\pm}0.5$ & $\mathbf{49.3{\scriptstyle\pm}1.8}$ & $45.6{\scriptstyle\pm}2.8$ \\
Qwen3-4B-Base & GSM8K & $79.6{\scriptstyle\pm}0.2$ & $81.8{\scriptstyle\pm}0.2$ & $79.8{\scriptstyle\pm}0.5$ & $\mathbf{82.3{\scriptstyle\pm}0.2}$ \\
Qwen3-4B-Base & Codex & $90.3{\scriptstyle\pm}1.0$ & $\mathbf{92.1{\scriptstyle\pm}1.0}$ & $91.2{\scriptstyle\pm}1.4$ & $91.7{\scriptstyle\pm}1.4$ \\
Qwen3-4B-Base & BBH & $70.2{\scriptstyle\pm}0.2$ & $70.7{\scriptstyle\pm}0.2$ & $70.6{\scriptstyle\pm}0.2$ & $\mathbf{70.8{\scriptstyle\pm}0.3}$ \\
\midrule
OLMo3-7B & TyDiQA & $61.1{\scriptstyle\pm}0.8$ & $\mathbf{62.0{\scriptstyle\pm}0.5}$ & $55.3{\scriptstyle\pm}0.4$ & $57.8{\scriptstyle\pm}1.1$ \\
OLMo3-7B & MMLU-Pro & $\mathbf{38.6{\scriptstyle\pm}0.1}$ & $37.4{\scriptstyle\pm}0.0$ & $29.7{\scriptstyle\pm}8.3$ & $37.6{\scriptstyle\pm}1.6$ \\
OLMo3-7B & GSM8K & $73.3{\scriptstyle\pm}0.5$ & $74.2{\scriptstyle\pm}0.0$ & $\mathbf{75.0{\scriptstyle\pm}0.4}$ & $61.5{\scriptstyle\pm}14.2$ \\
OLMo3-7B & Codex & $\mathbf{84.0{\scriptstyle\pm}2.4}$ & $80.0{\scriptstyle\pm}0.4$ & $\mathbf{84.0{\scriptstyle\pm}3.1}$ & $80.6{\scriptstyle\pm}0.8$ \\
OLMo3-7B & BBH & $\mathbf{64.5{\scriptstyle\pm}0.1}$ & $64.4{\scriptstyle\pm}0.1$ & $63.0{\scriptstyle\pm}0.0$ & $64.4{\scriptstyle\pm}0.2$ \\
\bottomrule
\end{tabular}
\end{table*}

\begin{table*}[!htbp]
\centering
\caption{Held-out performance at $k=5{,}000$.}
\label{tab:main-results-k5000}
\small\begin{tabular}{l|l|r|r|r|r}
Model & Task & \LESSER{} & \LESS{} & \RDS{} & Random \\
\midrule
Llama-3.2-3B & TyDiQA & $64.3{\scriptstyle\pm}1.4$ & $\mathbf{66.3{\scriptstyle\pm}0.2}$ & $58.2{\scriptstyle\pm}0.9$ & $60.5{\scriptstyle\pm}0.2$ \\
Llama-3.2-3B & MMLU-Pro & $\mathbf{26.9{\scriptstyle\pm}0.4}$ & $26.1{\scriptstyle\pm}0.2$ & $\mathbf{26.9{\scriptstyle\pm}0.1}$ & $26.6{\scriptstyle\pm}0.1$ \\
Llama-3.2-3B & GSM8K & $34.6{\scriptstyle\pm}0.1$ & $34.5{\scriptstyle\pm}0.6$ & $\mathbf{40.0{\scriptstyle\pm}0.6}$ & $33.3{\scriptstyle\pm}0.9$ \\
Llama-3.2-3B & Codex & $57.4{\scriptstyle\pm}2.0$ & $58.1{\scriptstyle\pm}3.1$ & $\mathbf{61.0{\scriptstyle\pm}0.4}$ & $59.0{\scriptstyle\pm}1.0$ \\
Llama-3.2-3B & BBH & $48.5{\scriptstyle\pm}0.3$ & $49.6{\scriptstyle\pm}0.4$ & $\mathbf{49.7{\scriptstyle\pm}0.3}$ & $48.5{\scriptstyle\pm}0.5$ \\
\midrule
Llama-2-7B & TyDiQA & $51.7{\scriptstyle\pm}0.4$ & $\mathbf{52.9{\scriptstyle\pm}0.1}$ & $50.4{\scriptstyle\pm}0.4$ & $51.2{\scriptstyle\pm}0.2$ \\
Llama-2-7B & MMLU-Pro & $20.2{\scriptstyle\pm}0.3$ & $20.2{\scriptstyle\pm}0.3$ & $\mathbf{20.7{\scriptstyle\pm}0.1}$ & $20.0{\scriptstyle\pm}0.4$ \\
Llama-2-7B & GSM8K & $21.7{\scriptstyle\pm}0.5$ & $21.3{\scriptstyle\pm}0.5$ & $\mathbf{23.0{\scriptstyle\pm}0.4}$ & $16.7{\scriptstyle\pm}0.8$ \\
Llama-2-7B & Codex & $29.1{\scriptstyle\pm}2.4$ & $24.5{\scriptstyle\pm}2.1$ & $\mathbf{30.9{\scriptstyle\pm}1.6}$ & $27.3{\scriptstyle\pm}1.4$ \\
Llama-2-7B & BBH & $41.0{\scriptstyle\pm}0.4$ & $\mathbf{42.9{\scriptstyle\pm}0.4}$ & $41.8{\scriptstyle\pm}0.2$ & $40.3{\scriptstyle\pm}0.6$ \\
\midrule
Qwen3-4B-Base & TyDiQA & $\mathbf{73.4{\scriptstyle\pm}0.2}$ & $71.7{\scriptstyle\pm}0.3$ & $71.2{\scriptstyle\pm}0.5$ & $70.9{\scriptstyle\pm}0.7$ \\
Qwen3-4B-Base & MMLU-Pro & $48.7{\scriptstyle\pm}0.4$ & $49.2{\scriptstyle\pm}0.0$ & $\mathbf{50.0{\scriptstyle\pm}0.4}$ & $49.0{\scriptstyle\pm}0.2$ \\
Qwen3-4B-Base & GSM8K & $79.7{\scriptstyle\pm}0.8$ & $80.3{\scriptstyle\pm}0.7$ & $78.7{\scriptstyle\pm}0.3$ & $\mathbf{80.4{\scriptstyle\pm}0.5}$ \\
Qwen3-4B-Base & Codex & $90.3{\scriptstyle\pm}0.4$ & $92.1{\scriptstyle\pm}1.4$ & $\mathbf{92.3{\scriptstyle\pm}2.2}$ & $\mathbf{92.3{\scriptstyle\pm}2.2}$ \\
Qwen3-4B-Base & BBH & $69.5{\scriptstyle\pm}0.1$ & $\mathbf{71.5{\scriptstyle\pm}0.0}$ & $70.5{\scriptstyle\pm}0.6$ & $70.3{\scriptstyle\pm}0.6$ \\
\midrule
OLMo3-7B & TyDiQA & $63.5{\scriptstyle\pm}0.4$ & $\mathbf{63.6{\scriptstyle\pm}0.5}$ & $59.1{\scriptstyle\pm}0.2$ & $60.6{\scriptstyle\pm}0.5$ \\
OLMo3-7B & MMLU-Pro & $37.8{\scriptstyle\pm}0.1$ & $37.8{\scriptstyle\pm}0.4$ & $\mathbf{38.2{\scriptstyle\pm}0.2}$ & $37.6{\scriptstyle\pm}0.4$ \\
OLMo3-7B & GSM8K & $73.6{\scriptstyle\pm}0.4$ & $73.0{\scriptstyle\pm}1.8$ & $\mathbf{76.4{\scriptstyle\pm}0.8}$ & $28.7{\scriptstyle\pm}12.1$ \\
OLMo3-7B & Codex & $81.5{\scriptstyle\pm}1.7$ & $81.5{\scriptstyle\pm}0.8$ & $\mathbf{82.9{\scriptstyle\pm}4.6}$ & $82.0{\scriptstyle\pm}1.0$ \\
OLMo3-7B & BBH & $64.3{\scriptstyle\pm}0.2$ & $64.7{\scriptstyle\pm}0.2$ & $63.2{\scriptstyle\pm}0.2$ & $\mathbf{65.0{\scriptstyle\pm}0.2}$ \\
\bottomrule
\end{tabular}
\end{table*}

\begin{table*}[!htbp]
\centering
\caption{Held-out performance at $k=10{,}000$.}
\label{tab:main-results-k10000}
\small\begin{tabular}{l|l|r|r|r|r}
Model & Task & \LESSER{} & \LESS{} & \RDS{} & Random \\
\midrule
Llama-3.2-3B & TyDiQA & $66.0{\scriptstyle\pm}0.2$ & $\mathbf{67.2{\scriptstyle\pm}0.1}$ & $60.3{\scriptstyle\pm}0.1$ & $63.6{\scriptstyle\pm}0.4$ \\
Llama-3.2-3B & MMLU-Pro & $26.5{\scriptstyle\pm}0.2$ & $27.1{\scriptstyle\pm}0.4$ & $\mathbf{27.3{\scriptstyle\pm}0.1}$ & $26.9{\scriptstyle\pm}0.3$ \\
Llama-3.2-3B & GSM8K & $35.2{\scriptstyle\pm}0.4$ & $36.7{\scriptstyle\pm}0.9$ & $\mathbf{39.6{\scriptstyle\pm}0.3}$ & $35.2{\scriptstyle\pm}0.9$ \\
Llama-3.2-3B & Codex & $57.4{\scriptstyle\pm}0.7$ & $58.3{\scriptstyle\pm}2.1$ & $\mathbf{60.1{\scriptstyle\pm}2.0}$ & $58.3{\scriptstyle\pm}4.3$ \\
Llama-3.2-3B & BBH & $49.3{\scriptstyle\pm}0.5$ & $49.1{\scriptstyle\pm}0.6$ & $\mathbf{49.4{\scriptstyle\pm}0.4}$ & $48.6{\scriptstyle\pm}0.2$ \\
\midrule
Llama-2-7B & TyDiQA & $52.8{\scriptstyle\pm}0.4$ & $\mathbf{53.6{\scriptstyle\pm}0.1}$ & $50.7{\scriptstyle\pm}0.6$ & $52.1{\scriptstyle\pm}0.2$ \\
Llama-2-7B & MMLU-Pro & $\mathbf{21.2{\scriptstyle\pm}0.3}$ & $21.0{\scriptstyle\pm}0.2$ & $20.7{\scriptstyle\pm}0.3$ & $20.7{\scriptstyle\pm}0.2$ \\
Llama-2-7B & GSM8K & $22.2{\scriptstyle\pm}0.5$ & $22.6{\scriptstyle\pm}1.1$ & $\mathbf{24.2{\scriptstyle\pm}0.3}$ & $17.3{\scriptstyle\pm}1.0$ \\
Llama-2-7B & Codex & $30.4{\scriptstyle\pm}0.7$ & $26.6{\scriptstyle\pm}0.8$ & $29.7{\scriptstyle\pm}1.4$ & $\mathbf{30.6{\scriptstyle\pm}1.6}$ \\
Llama-2-7B & BBH & $41.7{\scriptstyle\pm}0.3$ & $\mathbf{43.1{\scriptstyle\pm}0.1}$ & $42.7{\scriptstyle\pm}0.0$ & $41.5{\scriptstyle\pm}0.4$ \\
\midrule
Qwen3-4B-Base & TyDiQA & $\mathbf{72.7{\scriptstyle\pm}0.1}$ & $72.2{\scriptstyle\pm}0.0$ & $70.8{\scriptstyle\pm}0.1$ & $70.7{\scriptstyle\pm}0.3$ \\
Qwen3-4B-Base & MMLU-Pro & $48.3{\scriptstyle\pm}0.4$ & $48.8{\scriptstyle\pm}0.3$ & $\mathbf{49.5{\scriptstyle\pm}0.3}$ & $48.5{\scriptstyle\pm}0.3$ \\
Qwen3-4B-Base & GSM8K & $78.7{\scriptstyle\pm}0.6$ & $\mathbf{80.7{\scriptstyle\pm}0.6}$ & $79.2{\scriptstyle\pm}0.4$ & $79.7{\scriptstyle\pm}0.6$ \\
Qwen3-4B-Base & Codex & $91.9{\scriptstyle\pm}1.2$ & $\mathbf{92.8{\scriptstyle\pm}1.7}$ & $91.9{\scriptstyle\pm}1.2$ & $92.3{\scriptstyle\pm}1.7$ \\
Qwen3-4B-Base & BBH & $69.7{\scriptstyle\pm}0.5$ & $\mathbf{71.7{\scriptstyle\pm}0.1}$ & $70.2{\scriptstyle\pm}0.2$ & $69.8{\scriptstyle\pm}0.3$ \\
\midrule
OLMo3-7B & TyDiQA & $63.6{\scriptstyle\pm}0.2$ & $\mathbf{64.4{\scriptstyle\pm}0.1}$ & $60.1{\scriptstyle\pm}0.5$ & $61.0{\scriptstyle\pm}0.5$ \\
OLMo3-7B & MMLU-Pro & $37.4{\scriptstyle\pm}0.4$ & $\mathbf{38.7{\scriptstyle\pm}0.2}$ & $38.1{\scriptstyle\pm}1.0$ & $38.1{\scriptstyle\pm}0.3$ \\
OLMo3-7B & GSM8K & $74.7{\scriptstyle\pm}0.2$ & $74.6{\scriptstyle\pm}0.1$ & $\mathbf{76.1{\scriptstyle\pm}0.6}$ & $70.0{\scriptstyle\pm}6.7$ \\
OLMo3-7B & Codex & $\mathbf{82.9{\scriptstyle\pm}2.6}$ & $79.1{\scriptstyle\pm}0.7$ & $81.8{\scriptstyle\pm}0.7$ & $82.7{\scriptstyle\pm}2.1$ \\
OLMo3-7B & BBH & $65.0{\scriptstyle\pm}0.5$ & $65.0{\scriptstyle\pm}0.2$ & $63.4{\scriptstyle\pm}0.1$ & $\mathbf{65.1{\scriptstyle\pm}0.1}$ \\
\bottomrule
\end{tabular}
\end{table*}

\FloatBarrier
\subsection{Query-loss curves across models and tasks}
\label{app:bin-loss}

This subsection gives the detailed results of the similarity-bin diagnostic in \cref{sec:rq1-sft}. Figure~\ref{fig:bin-loss-curves} shows Llama-2; Figures~\ref{fig:bin-loss-grid}--\ref{fig:bin-loss-olmo3-7b} show the other models.

Higher similarity generally predicts lower query loss: mean bin-index Spearman correlation is $0.714$ for \LESSER{}, $0.887$ for \LESS{}, and $0.058$ for \RDS{}. \LESSER{} has positive correlation in $19$ of $20$ pairs and its closest bin has the lowest loss in $18$. OLMo3/MMLU-Pro is the correlation exception, with $\rho=-0.030$.

\begin{figure}[!htbp]
\centering
\includegraphics[width=\linewidth]{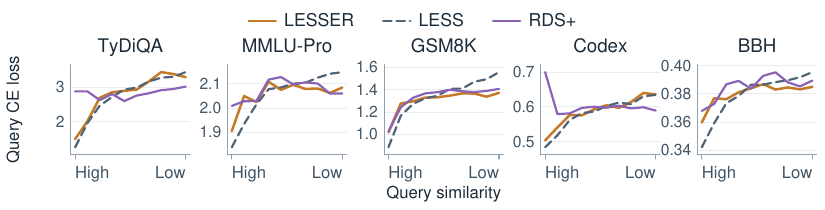}
\caption{\textbf{Llama-3.2-3B: query loss across similarity bins.} Each bin uses one training run; bin $0$ is closest to the queries.}
\label{fig:bin-loss-grid}
\end{figure}

\begin{figure}[!htbp]
\centering
\includegraphics[width=\linewidth]{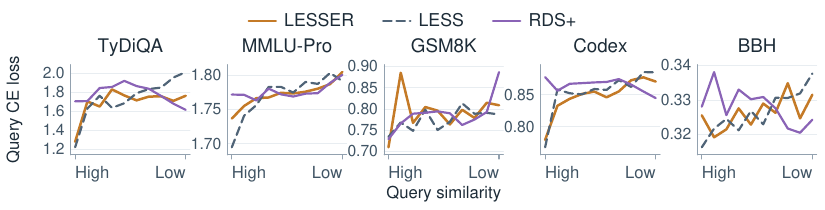}
\caption{\textbf{Qwen3-4B-Base: query loss across similarity bins.} Each bin uses one training run; bin $0$ is closest to the queries.}
\label{fig:bin-loss-qwen3-4b-base}
\end{figure}

\begin{figure}[!htbp]
\centering
\includegraphics[width=\linewidth]{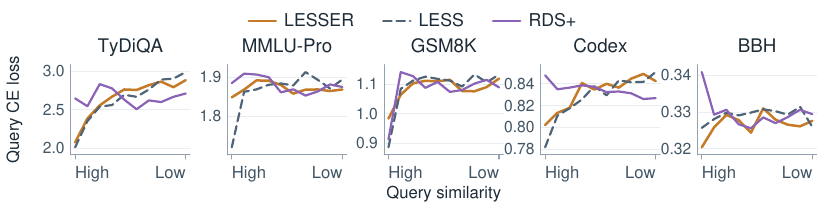}
\caption{\textbf{OLMo3-7B: the closest \LESSER{} bin has the lowest query loss on all five tasks.} Loss is not monotone across bins on MMLU-Pro. Each bin uses one training run.}
\label{fig:bin-loss-olmo3-7b}
\end{figure}

\FloatBarrier
\section{GRACE teacher rankings}
\label{app:grace-controls}

\noindent\Cref{tab:current-grace} gives the detailed results of the GRACE experiment in \cref{sec:grace}. With \LESSER{} features, GRACE ranks the teachers nearly as it does with full gradients and picks the same top-ranked teacher in all three settings.

\begin{table}[H]
\centering
\caption{Teacher-ranking agreement between \LESSER{} and full-gradient GRACE. Spearman correlation uses the complete ranking.}
\label{tab:current-grace}
\small\begin{tabular}{llrcl}
Task & Student & Teachers & Spearman & Top-ranked teacher \\
\midrule
GSM8K & Llama-3.2-1B & $14$ & $0.938$ & Qwen2.5-1.5B-Instruct \\
GSM8K & OLMo-2-1B & $10$ & $0.964$ & Qwen2.5-3B-Instruct \\
MATH & Llama-3.2-3B & $10$ & $0.976$ & Qwen2.5-3B-Instruct \\
\bottomrule
\end{tabular}
\end{table}

\FloatBarrier
\section{Overlap of different selection algorithms in SFT}
\label{app:utility-results}

\FloatBarrier
\label{app:selection-overlap}
Table~\ref{tab:selection-overlap} compares the saved \LESSER{} and released \LESS{} selections on all four models. Each set contains $5{,}000$ unique indices in the same $197{,}196$-example Tulu pool. Jaccard similarity is $|S_{\LESSER}\cap S_{\LESS}|/|S_{\LESSER}\cup S_{\LESS}|$; it measures item overlap.

\begin{table}[htbp]
\centering
\caption{\LESSER{} and \LESS{} select substantially different examples. Each cell gives the number of shared examples and the Jaccard similarity.}
\label{tab:selection-overlap}
\resizebox{\linewidth}{!}
{\begin{tabular}{l|r|r|r|r|r|r}
Model & TyDiQA & MMLU-Pro & GSM8K & Codex & BBH & Mean Jaccard \\
\midrule
Llama-2-7B & $491$ / $0.052$ & $505$ / $0.053$ & $324$ / $0.033$ & $214$ / $0.022$ & $228$ / $0.023$ & $0.037$ \\
Llama-3.2-3B & $407$ / $0.042$ & $433$ / $0.045$ & $441$ / $0.046$ & $266$ / $0.027$ & $186$ / $0.019$ & $0.036$ \\
Qwen3-4B-Base & $594$ / $0.063$ & $173$ / $0.018$ & $337$ / $0.035$ & $269$ / $0.028$ & $143$ / $0.015$ & $0.032$ \\
OLMo3-7B & $1{,}004$ / $0.112$ & $190$ / $0.019$ & $250$ / $0.026$ & $476$ / $0.050$ & $182$ / $0.019$ & $0.045$ \\
\bottomrule
\end{tabular}}
\end{table}

\FloatBarrier

\FloatBarrier

\FloatBarrier

\FloatBarrier

\section{Details of the analysis in Section~\ref{sec:analysis-new}}
\label{app:new-finite}

\subsection{Per-example scores}
\label{app:local-utility}

\paragraph{Motivation.}
We first ask how much of the full-gradient selection signal is already present in the output layer at the level of individual examples.
Let $L_Q$ be the aggregate loss on the query set $\mathcal Q$, with gradient $G_Q=\nabla_\theta L_Q$.
For a candidate $p$ with gradient $G_p$, we let local utility be the reduction in query loss from a small SGD step on $p$, normalized by the size of that step:
\begin{equation}
U(p)=\frac{L_Q(\theta)-L_Q(\theta-\eta G_p)}
{\eta\lVert G_p\rVert}
\approx
\frac{G_Q^\top G_p}{\lVert G_p\rVert}.
\label{eq:onestep-value}
\end{equation}
Thus, the cosine between candidate and query gradients provides a first-order measure of how useful an individual candidate is for the query set.

\paragraph{Setup.}
We evaluate these per-example scores on $20$ model--task pairs.
For each pair, we sample up to $3{,}000$ candidates from the pool and, for the two 7B models, keep only examples of at most $1{,}024$ tokens, leaving $1{,}903$--$3{,}000$ candidates per setting.
For each example, we compute fp32 gradients of all native parameters at the base model.
To measure how the ranking changes as more layers of the model are included, Figure~\ref{fig:layer-curve} starts with the output layer and progressively adds decoder blocks from last to first, including each block's attention, MLP, and normalization parameters.

\paragraph{Findings.}
Output-layer and full-gradient cosine scores have a median Spearman correlation of $0.52$ across the $20$ model--task pairs, with per-model medians ranging from $0.36$ to $0.72$.
Adding more layers steadily closes this gap: the median correlation with the full-gradient ranking rises from $0.52$ for the output layer alone to $0.65$ with half of the decoder blocks and to $1.00$ when all decoder blocks are included.
Thus, output-layer gradients capture a substantial but incomplete part of the per-example ordering; recovering the full-gradient ranking generally requires information from much of the network.

\begin{figure}[htbp]
\centering
\includegraphics[width=0.70\linewidth]{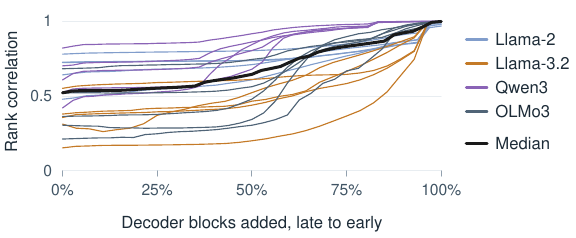}
\caption{\textbf{Per-example rankings depend on gradients throughout the network.}
Spearman correlation with the full-gradient cosine ranking as decoder blocks are added to the output layer from last to first.
Candidates are ranked by their cosine with the query gradient over the included parameters.
Thin lines denote model--task pairs, colored by model, and the black line shows the median.}
\label{fig:layer-curve}
\end{figure}

To characterize what is lost by using only the output layer, we decompose the full-gradient score into contributions from the output layer and the remaining parameters. Expand
\[
G_p
=
\begin{bmatrix}
G_p^{\mathrm{out}}\\
G_p^{\mathrm{body}}
\end{bmatrix},
\qquad
G_Q
=
\begin{bmatrix}
G_Q^{\mathrm{out}}\\
G_Q^{\mathrm{body}}
\end{bmatrix},
\]
\[
G_Q^\top G_p
=
\left(G_Q^{\mathrm{out}}\right)^\top G_p^{\mathrm{out}}
+
\left(G_Q^{\mathrm{body}}\right)^\top G_p^{\mathrm{body}}.
\]
Since $\lVert G_Q\rVert$ is constant across candidates, have
\[
\frac{G_Q^\top G_p}{\lVert G_p\rVert}
=
\underbrace{
\frac{
\left(G_Q^{\mathrm{out}}\right)^\top G_p^{\mathrm{out}}
}{
\lVert G_p\rVert
}
}_{H(p)}
+
\underbrace{
\frac{
\left(G_Q^{\mathrm{body}}\right)^\top G_p^{\mathrm{body}}
}{
\lVert G_p\rVert
}
}_{B(p)}.
\]
So $H(p)$ and $B(p)$ are the relative contributions of output-layer and body layers.

Across model--task pairs, $H(p)$ and $B(p)$ have a median correlation of $0.36$, indicating that the remaining layers do not simply reproduce the output-layer contribution. However, candidates favored by the output layer also tend to receive favorable contributions from the rest of the network. Let $\mathcal{T}$ denote the top $10\%$ of candidates under the output-layer cosine score, and define
\[
\Delta_H
=
\mathbb{E}_{p\in\mathcal{T}}[H(p)]
-
\mathbb{E}_{p\in\mathcal{D}}[H(p)],
\qquad
\Delta_B
=
\mathbb{E}_{p\in\mathcal{T}}[B(p)]
-
\mathbb{E}_{p\in\mathcal{D}}[B(p)].
\]
We find $\Delta_B>0$ in $19$ of $20$ model--task pairs, with median
$
\frac{\Delta_B}{\Delta_H}=0.82.
$
So candidates favored by the output layer are usually also favored by the rest of the network.

\subsection{Batch gradient similarity between selections}
\label{app:gradient-similarity}

\paragraph{Setup.}
We compare the full-model gradients induced by batches from different selections at the base model $\theta_0$.
Let $G_S$ be the gradient of the recipe's token-averaged loss on batch $S$, and let $\mu$ be the expected gradient of a uniformly sampled batch of the same size.
To measure alignment beyond this shared random-batch direction, we use the centered batch-gradient alignment between the batches $S$ of one method and the batches $S'$ of another
\begin{equation}
\frac{
\operatorname{avg}\,(G_S-\mu)^\top(G_{S'}-\mu)
}{
\sqrt{
\operatorname{avg}\lVert G_S-\mu\rVert^2
\;\operatorname{avg}\lVert G_{S'}-\mu\rVert^2
}
},
\label{eq:gradient-similarity}
\end{equation}
where the numerator averages over all pairs of a batch $S$ and a batch $S'$, and each squared norm averages over the batches of one sample.
Because the population mean random-batch gradient
$\mu=\mathbb{E}[G_S]$ is unknown, we estimate it from the random samples that are not part of the comparison, described below. When neither sample is random, two of the four random samples give $\hat\mu_1$ and the other two give $\hat\mu_2$, averaged over the three ways to split them; when one sample is random, any two of the other three give $\hat\mu_1$ and $\hat\mu_2$, averaged over the choices.
We then replace
$(G_S-\mu)^\top(G_{S'}-\mu)$ with
$(G_S-\hat\mu_1)^\top(G_{S'}-\hat\mu_2)$,
so that no random example contributes to both mean estimates, and estimate each squared norm as $(G_S-\hat\mu_1)^\top(G_S-\hat\mu_2)$.

For each of the $20$ model--task pairs, we sample $1{,}024$ examples from the $k=5{,}000$ selections of \LESSER{}, \LESS{}, \RDS{}, and Random. We also draw three additional uniform samples of $1{,}024$ examples from the candidate pool, giving four random samples in total.

We use these samples to construct two baselines in Figure~\ref{fig:alignment-base}. The Random baseline compares each random sample with \LESS{}, while the \LESS{} baseline compares two disjoint batches drawn from the same \LESS{} selection.

\paragraph{Findings.}
Raw batch gradients can be strongly aligned even under random sampling: at batch size $128$, two random batches have median cosine $0.81$ across the $20$ model--task pairs.
Thus, raw cosine largely reflects a direction shared by typical training batches, rather than the effect of selection itself.
We therefore center each batch gradient by the expected random-batch gradient.

Under this centered measure, \LESSER{} batches remain substantially aligned with \LESS{} batches.
At batch size $128$, their similarity is positive in $17$ of $20$ pairs and typically exceeds the \RDS{} and random baselines (Figure~\ref{fig:alignment-base}).
Across batch sizes $64$--$512$, the median over model--task pairs of the ratio between the \LESSER{}--\LESS{} similarity and the similarity between two disjoint \LESS{} batches is $0.58$--$0.66$.

\subsection{Training trajectories}
\label{app:trajectories}

\paragraph{Setup.}
For each of the $20$ model--task pairs, we train for $80$ AdamW steps with batch size $128$ on the $k=5{,}000$ selected examples, following Appendix~\ref{app:sft-protocol}.
We use two training seeds each for \LESSER{} and \LESS{} and one run for each of two random selections.
At every step, we record the query loss.

Figure~\ref{fig:trajectories-loss} shows the query-loss curves of every model--task pair; Figure~\ref{fig:training-directions} summarizes these curves by the median share of the largest reduction reached at each step.

\begin{figure}[htbp]
\centering
\includegraphics[width=\linewidth]{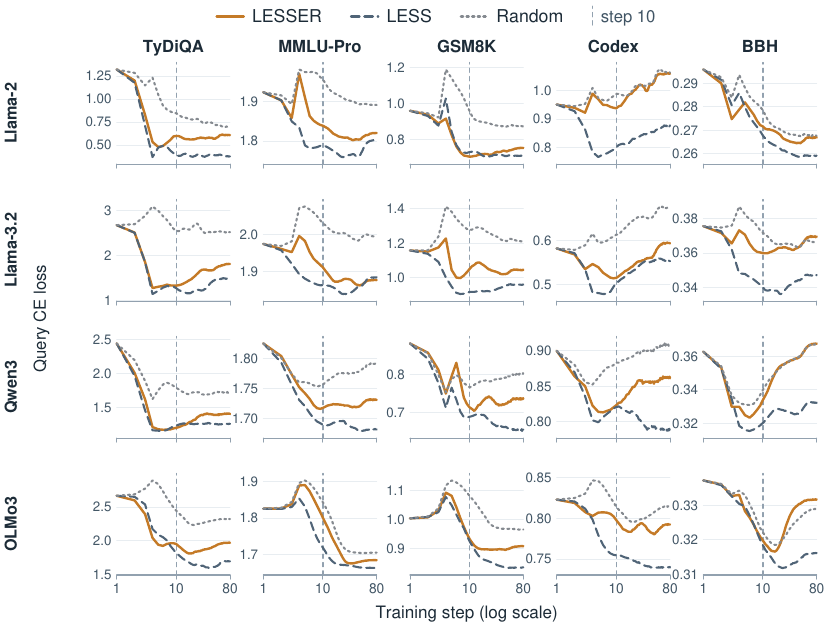}
\caption{\textbf{Query loss across training steps.} Each panel is one model--task pair.}
\label{fig:trajectories-loss}
\end{figure}

\subsection{RL and GRACE statistics}
\label{app:rl-grace}

\paragraph{RL.}
We analyze the first selection round of the Countdown experiment, restricting to the $1{,}835$ of $5{,}120$ candidate problems with nonzero advantage.
Each problem aggregates $16$ sampled responses, and we compare its \LESSER{} and full-gradient scores at the base policy.
The two rankings have Spearman correlation $\rho=0.69$.
At the selection budget, they select $69\%$ of the same problems, a Jaccard similarity of $0.53$ against $0.075$ for two random selections.

\paragraph{GRACE.}
We analyze the $14$ teachers in the GSM8K experiment with the Llama-3.2-1B student.
For each teacher, GRACE summarizes the student's gradients on $2{,}048$ teacher responses using statistics of their spectrum and pairwise similarities.
Replacing full gradients with output-layer gradients largely preserves these teacher-level quantities: across teachers, spectral entropy and effective rank have Spearman correlations $0.94$ and $0.93$, while the remaining aggregate statistics have correlations $0.75$--$0.98$.
This agreement does not hold at the response level: the $64$ most central responses under the two representations have Jaccard similarity only $0.12$.

\end{document}